\newtcolorbox{ccomment}{colback=yellow!5!white,colframe=red!75!black,arc=0mm,title=comment} %,borderline={0.5mm}{0mm}{ForestGreen!50!white,dashed}
\newtcolorbox{rcomment}{colback=red!5!white,colframe=blue!75!black,arc=0mm,title=report}
\newtcolorbox{acomment}{colback=blue!5!white,colframe=black!75!black,arc=0mm,title=paper}
\newcommand{\onlyrr}[2]{#1}
\newcommand{\ontoset}{
\tikzstyle{class}=[shape=rectangle, rounded corners, minimum size=.5cm,draw,font=\sf] 
\tikzstyle{type}=[shape=rectangle,color=blue, minimum size=.3cm,draw,font=\tt] 
\tikzstyle{attr}=[color=green!50!black,minimum size=.5cm,font=\sf] 
\tikzstyle{inst}=[font=\itshape,minimum size=.5cm,draw=blue,thick,font=\sf] %shape=ellipse, % style italics
\tikzstyle{data}=[minimum size=.5cm,fill=blue!20,font=\tt] %shape=elipse,
\tikzstyle{restr}=[shape=rectangle,minimum size=.5cm,fill=purple!20,font=\sf]
\tikzstyle{axiom}=[shape=rectangle,minimum size=.5cm,fill=orange!20,font=\it]

\tikzstyle{super}=[>=latex,->,thick]
\tikzstyle{parent}=[>=latex,-,thick]
\tikzstyle{const}=[>=latex,->,thin, densely dotted]
\tikzstyle{excl}=[decorate,decoration=zigzag,thick]
\tikzstyle{isa}=[>=latex,->,thin]
\tikzstyle{val}=[>=latex,->,thin]

\tikzstyle{corresp}=[color=blue,<->,very thick]
\tikzstyle{corrbox}=[rectangle,color=blue,very thick]
}
\newtheorem{definition}{Definition}
\newtheorem{example}{Example}
\newtheorem{theorem}{Theorem}
\newtheorem{property}[theorem]{Property}
\newtheorem{lemma}{Lemma}
\newcommand{\superimpose}[2]{%
  {\ooalign{$#1\@firstoftwo#2$\cr\hfil$#1\@secondoftwo#2$\hfil\cr}}}
\newcommand\sqin{\ensuremath{\mathrel{\mathpalette\superimpose{{\sqsubset}{-}}}}}
\begin{document}
\makeRR   % cas d'un rapport de recherche
%% \makeRT % cas d'un rapport technique.
%% a partir d'ici, chacun fait comme il le souhaite

%%%%%%%%%%%%%%%%%%%%%% COPY HERE DIRECTLY %%%%%%%%%%%%%%%%%%%%%%%%%

\section{Motivation}

The semantic web relies on knowledge deployed and connected over the web.
This knowledge is based on ontologies expressed in languages such as RDF, RDF Schema and OWL.
Because of the multiplicity of ontologies, they may be connected through alignments expressing correspondences between their concepts.
This allows for translating assertions across ontologies or merging them.
This can be seen as a network of ontologies related by alignments.
%Our goal is to give a semantics to such networks.

The goal of this report is to provide a formal account of such networks of ontologies.
It aims at abstracting properties of networks so that work relying on such properties can apply independently of their interpretation.
In particular, it can contribute to define algebras or revision operators of networks of ontologies.

We first precisely define what alignments and networks of ontologies are through their syntax (\S\ref{sec:asyntax})
before addressing their semantics.
There is no ``standard'' semantics for networks of ontologies, so we provide here an abstract 
view that aims at covering those which have been proposed so far (\S\ref{sec:abssem}).
Based on this framework, we define the notions of closure from this semantics (\S\ref{sec:cons}).
Finally, we show that networks of ontologies for a category and we exhibit some properties of these categories (\S\ref{sec:categ}).
Relation to other work is discussed (\S\ref{sec:rwork}) before concluding.

\bigskip

In this report, ontologies are considered as logical theories and may contain ground assertions.
The languages used in the semantic web such as RDF or OWL are indeed logics \cite{hitzler2009a,antoniou2012a}.
The semantics of ontologies are only considered in this paper through their sets of models ($\mathcal{M}(o)$) and consequence relation ($\models$).
% UNDEF such that $o\models \delta$ if and only if $\forall m\in\match{M}(o), m\models\delta$).
Such a relation satisfies three properties ($o$, $o'$ are ontologies, i.e., sets of assertions, $\delta$ and $\gamma$ are assertions):
\begin{description*}
\item[extensivity] $\{\delta\}\models\delta$
\item[monotony] if $o\models\delta$ then $o\cup o'\models\delta$
\item[idempotency] if $o\models\delta$ and $o\cup\{\delta\}\models\gamma$ then $o\models\gamma$
\end{description*}

% CNALPHA
We assume a consequence closure function $Cn^{\omega}(o)=\{\delta\mid o\models\delta\}$.

\section{Alignments and networks of ontologies}\label{sec:networks}\label{sec:asyntax}

Alignments express the correspondences between entities of different ontologies \cite{euzenat2013c}. 
Given an ontology $o$ in a language $L$, we use an \emph{entity language} ($Q_L(o)$) for characterising those entities that will be put in correspondence. 
The entity language can be simply made of all the terms or formulas of the ontology language based on the ontology vocabulary.
It can restrict them to the named terms or, on the contrary, extend them to all the queries that may be expressed on this vocabulary.
Alignments express relations between such entities through a finite set $\Theta$ of relations which are independent from ontology relations.

\begin{definition}[Alignment, correspondence] 
Given two ontologies $o$ and $o'$ with associated entity languages $Q_L$ and $Q_{L'}$ and a set of alignment 
relations $\Theta$, a 
\emph{correspondence} is  a triple:
$\langle e, e', r \rangle\in Q_L(o)\times Q'_{L'}(o')\times \Theta$ expressing that the relation $r$ holds between entity $e$ and $e'$.
An alignment is a set of correspondences between two ontologies.
\end{definition}

For the sake of examples, we will consider that ontologies are description logic theories (T-box and A-box) and their entities are classes and individuals identified by URIs.
Classes are denoted by lower case letters, sometimes subscripted by an integer referring to their ontology, and individuals are denoted by $i$.
In the examples, we will only consider $=$, $\leq$, $\geq$ and $\bot$ for relations of $\Theta$.
They will be interpreted as relations expressing equivalence, subsumption and disjointness between classes. 
However, results are not restricted to these relations.

\begin{example}[Alignment]\label{ex:alignment}
The alignment $A_{1,3}$ of Figure~\ref{fig:aligned} (p.\pageref{fig:aligned}), is described by:
$$\{\langle e_1, f_3, \geq\rangle, \langle b_1, e_3,  \geq\rangle\}$$
also described as:
\begin{align*}
A_{1,3} = & \left\lbrace \begin{array}{r@{~}l@{~~~}r@{~}l}
e_1 &\geq f_3, \\
b_1 &\geq e_3
\end{array} \right\rbrace\\
\end{align*}
\end{example}

\begin{figure}[h!]
\setlength{\unitlength}{.5cm}%
\begin{center}
\begin{tikzpicture}[level distance=1cm,level/.style={sibling distance=1cm/#1}]
\ontoset;

\draw (0,0.5) node {{\large $o_1$}};
\draw[thick] (0,1.5) node (a1) {$a_1$} 
child {node (b1) {$b_1$}} 
child {node (c1) {$c_1$} 
child {node (d1) {$d_1$}} 
child {node (e1) {$e_1$}}
};

\draw (5,3.25) node {{\large $o_2$}};
\draw[thick] (5,4) node (a2) {$a_2$} 
child {node (b2) {$b_2$} 
child {node (f2) {$f_2$}} 
child {node (g2) {$g_2$}}}
child {node (c2) {$c_2$} 
child {node (d2) {$d_2$}} 
child {node (e2) {$e_2$}}
};

\draw (10.5,.5) node {{\large $o_3$}};
\draw[thick] (10,0) node (a3) {$a_3$} 
child {node (b3) {$b_3$} 
child {node (f3) {$f_3$}} 
child {node (g3) {$g_3$}}} 
child {node (c3) {$c_3$} 
child {node (d3) {$d_3$}} 
child {node (e3) {$e_3$}}
};
\draw (11,-3) node (i) {$i$};
\draw (i) -- node[right] {$\sqin$} (e3.south);
\draw (b3.east) -- node[above] {$\bot$} (c3.west);

\draw (2.5,-1.2) node {{\large $A_{1,3}$}};
\draw[dotted] (b1.south) .. controls +(0,-2.5) and +(-.5,-1.5) .. node[sloped,below] {$\geq$} (e3.south);
\draw[dotted] (e1.south) .. controls +(0,-.5) and +(-1,-.5) .. node[sloped,above] {$\geq$} (f3.south west);
\draw (1,2) node {{\large $A_{1,2}$}};
%\draw[dotted] (a1.north east) .. controls +(0,.5) and +(-1,0) .. (a2.west);
%\draw[dotted] (b1.north east) .. controls +(.5,.5) and +(-1,0) .. (b2.west);
\draw[dotted] (b1.north east) .. controls +(.5,1) and +(-.5,-1) .. node[sloped,above] {$\leq$} (d2.south);
\draw (7,2.5) node {{\large $A_{2,3}$}};
\draw[dotted] (c2.east) .. controls +(1,-.5) and +(0,.5) .. node[above,sloped] {$\leq$} (b3.north);
%\draw[dotted] (d2.south) .. controls +(0,-.5) and +(0,.5) .. (d3.north);
%\draw[dotted] (e2.south east) .. controls +(1.5,-.5) and +(.5,.5) .. (e3.north east);
% 
\end{tikzpicture} 
\end{center}
\caption{A network of ontologies made of three ontologies ($o_1$, $o_2$, and $o_3$) and three alignments ($A_{1,2}$, $A_{1,3}$, and $A_{2,3}$).}\label{fig:aligned}
\end{figure}

The above definition can be generalised to an arbitrary number of alignments and ontologies captured 
in the concept of a network of ontologies (or distributed system \cite{ghidini1998a,franconi2003a}), i.e., sets of ontologies
and alignments.

\begin{definition}[Network of ontologies]\label{def:noo}
A network of ontologies $\langle\Omega, \Lambda\rangle$ is made of a finite set 
$\Omega$\index{$\Omega$ (set of ontologies)|emph} of ontologies and a set 
$\Lambda$\index{$\Lambda$ (set of alignments)|emph} of alignments between these ontologies. We denote by 
$\Lambda(o,o')$ the set of alignments in $\Lambda$ between $o$ and $o'$.
\end{definition}

\begin{example}[Network of ontologies]\label{ex:noo}
Figure~\ref{fig:aligned} presents three ontologies (in all examples, $c \sqsubseteq c'$ denotes subsumption between concepts $c$ and $c'$, $c\bot c'$ denotes disjointness between concepts $c$ and $c'$, and $i\sqin c$ denotes membership of individual $i$ to concept $c$):
\begin{align*}
o_{1} = & \left\lbrace \begin{array}{r@{~}l@{~~~}r@{~}l}
b_1 \sqsubseteq a_1, & c_1 \sqsubseteq a_1\\ 
d_1 \sqsubseteq c_1, & e_1 \sqsubseteq c_1
\end{array} \right\rbrace\\
o_{2} = & \left\lbrace \begin{array}{r@{~}l@{~~~}r@{~}l}
b_2 \sqsubseteq a_2, & c_2 \sqsubseteq a_2, & g_2 \sqsubseteq b_2\\ 
d_2 \sqsubseteq c_2, & e_2 \sqsubseteq c_2, & f_2 \sqsubseteq b_2
\end{array} \right\rbrace\\
o_{3} = & \left\lbrace \begin{array}{r@{~}l@{~~~}r@{~}l}
b_3 \sqsubseteq a_3, & c_3 \sqsubseteq a_3, & g_3 \sqsubseteq b_3\\ 
d_3 \sqsubseteq c_3, & e_3 \sqsubseteq c_3, & f_3 \sqsubseteq b_3\\
i \sqin e_3, & b_3 \bot c_3
\end{array} \right\rbrace\\
\end{align*}
\noindent together with three
alignments $A_{1,2}$, $A_{2,3}$, and $A_{3,1}$. 
These alignments can be described as follows:
\begin{align*}
A_{1,2} = & \left\lbrace \begin{array}{r@{~}l@{~~~}r@{~}l}
%\textlb{Employee} & = \textlb{Worker} & \textlb{Accounting} & = \textlb{Headquarters} \\ 
b_1 \leq d_2
\end{array} \right\rbrace\\
A_{2,3} = & \left\lbrace \begin{array}{r@{~}l@{~~~}r@{~}l}
%\textlb{Employee} & = \textlb{Worker} & \textlb{Accounting} & = \textlb{Headquarters}\\
c_2 \leq b_3
\end{array} \right\rbrace\\
A_{1,3} = & \left\lbrace \begin{array}{r@{~}l@{~~~}r@{~}l}
e_1 \geq f_3, & b_1 \geq e_3
\end{array} \right\rbrace\\
\end{align*}
\end{example}

%-------------------------------------------------------------------------------------------------------------------------------
Hereafter, we consider normalised networks of ontologies, i.e., networks with exactly one alignment between each pair of ontologies.

\begin{definition}[Normalised network of ontologies]
A network of ontologies $\langle\Omega, \Lambda\rangle$ is said normalised if and only if for any two ontologies $o$ and $o'$, $|\Lambda(o,o')|=1$.
\end{definition}
In a normalised network of ontologies, we denote by $\lambda(o,o')$ the unique alignment between $o$ and $o'$.

\onlyrr{
% JE: keep the easily here! see reviews.
Any network of ontologies may easily be normalised by:
\begin{itemize*}
%\item if $|\Lambda(o,o)|=0$, adding an identity alignment between $o$ and $o$,
\item if $|\Lambda(o,o')|=0$, adding an empty alignment between $o$ and $o'$,
\item if $|\Lambda(o,o')|>1$, replacing $\Lambda(o,o')$ by a unique alignment containing all the correspondences of the alignments of $\Lambda(o,o')$,
\end{itemize*}
We call this standard normalisation:
\begin{definition}[Standard normalisation]\label{def:normcorr}
Given a network ontology $\langle\Omega,\Lambda\rangle$, its standard normalisation $\langle\Omega,\overline{\Lambda}\rangle$ is
defined by $\forall o, o'\in\Omega$, 
$$\overline{\Lambda}(o,o')=
\begin{cases}
\{\{\}\}, & \text{if } \Lambda(o,o')=\emptyset,\\
\{\bigcup_{A\in\Lambda(o,o')}A\}, & \text{otherwise}
\end{cases}$$
The unique element of $\overline{\Lambda}(o,o')$ is denoted by $\overline{\lambda}(o,o')$.
\end{definition}

There are other ways to normalise such networks, but this simple one is sufficient for obtaining equivalent normalised networks (see Property~\ref{prop:normcorr}).
% Better ways?
% Add $id$ if $o=o'$
% Add inverse if it exists
% Better merge the alignments
}{}

Comparing networks of ontologies
% and their models 
is not in general simple.
For that purpose, we introduce the notion of morphism between two networks of ontologies.

\begin{definition}[Syntactic morphism between networks of ontologies]\label{def:synmorph}
Given two networks of ontologies, $\langle\Omega, \Lambda\rangle$ and $\langle \Omega',\Lambda'\rangle$,
a \emph{syntactic morphism} between $\langle\Omega, \Lambda\rangle$ and $\langle \Omega',\Lambda'\rangle$, is
$\langle h, k\rangle$, a pair of morphisms: $h:\Omega\longrightarrow\Omega'$ and $k:\Lambda\longrightarrow\Lambda'$ such that
$\forall o\in\Omega$, $\exists h(o)\in\Omega'$ and $o\subseteq h(o)$ and 
$\forall A\in\Lambda(o,o')$, $\exists k(A)\in \Lambda'(h(o),h(o'))$ and $A\subseteq k(A)$.
\end{definition}

Such a morphism exists when one network can be projected into another one an keep its structure and syntactic content.
This means that any ontology (respectively any alignment) of the former has a counterpart in the latter one which contains at least all of its axioms (respectively correspondences) and that the graph structure of the former network is preserved in the latter.
It is possible that several ontologies or alignments have the same counterpart as long as these conditions are met.

Morphisms can be used for defining syntactic subsumption between networks of ontologies.

\begin{definition}[Syntactic subsumption between networks of ontologies]\label{def:netsubs}
Given two networks of ontologies, $\langle\Omega, \Lambda\rangle$ and $\langle \Omega',\Lambda'\rangle$,
$\langle\Omega, \Lambda\rangle$ is \emph{syntactically subsumed} by $\langle \Omega',\Lambda'\rangle$, denoted by $\langle\Omega, \Lambda\rangle\sqsubseteq\langle \Omega',\Lambda'\rangle$, 
iff
there exist a syntactic morphism between $\langle\Omega, \Lambda\rangle$ and $\langle \Omega',\Lambda'\rangle$.
\end{definition}

\onlyrr{

We note: 
\begin{align*}
\langle\Omega, \Lambda\rangle\equiv\langle \Omega',\Lambda'\rangle &\text{ iff } \langle\Omega, \Lambda\rangle\sqsubseteq\langle \Omega',\Lambda'\rangle \text{ and }\langle\Omega', \Lambda'\rangle\sqsubseteq\langle \Omega,\Lambda\rangle\\
\langle\Omega, \Lambda\rangle\sqsubset\langle \Omega',\Lambda'\rangle&\text{ iff } \langle\Omega, \Lambda\rangle\sqsubseteq\langle \Omega',\Lambda'\rangle \text{ and }\langle\Omega', \Lambda'\rangle\not\sqsubseteq\langle \Omega,\Lambda\rangle
\end{align*}
}{
We note: 
$\langle\Omega, \Lambda\rangle\equiv\langle \Omega',\Lambda'\rangle$ iff $\langle\Omega, \Lambda\rangle\sqsubseteq\langle \Omega',\Lambda'\rangle$ and $\langle\Omega', \Lambda'\rangle\sqsubseteq\langle \Omega,\Lambda\rangle$;
$\langle\Omega, \Lambda\rangle\sqsubset\langle \Omega',\Lambda'\rangle$ iff $\langle\Omega, \Lambda\rangle\sqsubseteq\langle \Omega',\Lambda'\rangle$ and $\langle\Omega', \Lambda'\rangle\not\sqsubseteq\langle \Omega,\Lambda\rangle$.
}

This definition is purely syntactic because semantically equivalent networks may not be syntactically subsumed (it suffices to use one ontology whose axioms are equivalent but different).

The empty network of ontologies $\langle \emptyset, \emptyset\rangle$ (containing no ontology and no alignment) is subsumed by any other network of ontologies.

\onlyrr{

It is possible to simplify the above definition in case of normalised networks of ontologies.
\begin{property}
Given two normalised networks of ontologies $\langle\Omega, \Lambda\rangle$ and $\langle\Omega', \Lambda'\rangle$,
$\langle\Omega, \Lambda\rangle \sqsubseteq\langle\Omega', \Lambda'\rangle$ iff $\exists h:\Omega\rightarrow\Omega'$
a morphism such that 
$\forall o\in\Omega$, $\exists h(o)\in\Omega'$ and $o\subseteq h(o)$ and 
$\forall o, o'\in\Omega$, $\lambda(o,o')\subseteq\lambda'(h(o),h(o'))$.
\end{property}
\begin{proof}
$\Rightarrow$) 
In normalised networks, there always exists a single alignment between each pair of ontologies.
Hence, if $k$ is such that $\forall A\in\Lambda(o,o')$, $k(A)\in \Lambda'(h(o),h(o'))$, this means that $k(A)=k(\lambda(o,o'))=\lambda'(h(o),h(o'))$
and since $A\subseteq k(A)$, then $\lambda(o,o')\subseteq\lambda'(h(o),h(o'))$.

$\Leftarrow$)
In $\langle\Omega, \Lambda\rangle$, $\Lambda(o,o')=\{\lambda(o,o') \}$ and the same holds for $\langle\Omega', \Lambda'\rangle$, 
if $k(\lambda(o,o'))=\lambda'(h(o),h(o'))$ it satisfies the constraint that $k(A)\in \Lambda'(h(o),h(o'))$ and $A\subseteq k(A)$.
\end{proof}

Moreover, networks are subsumed by their standard normalisation.

\begin{property} Let $\langle\Omega,\Lambda\rangle$ a network of ontologies and $\langle\Omega,\overline{\Lambda}\rangle$ its standard normalisation,
$$\langle\Omega,\Lambda\rangle \sqsubseteq \langle\Omega,\overline{\Lambda}\rangle$$
\end{property}
\begin{proof}
Consider, the pair of morphisms $\langle h, k\rangle$ such that $h(o)=o$ and $\forall o,o'\in\Omega, \forall A\in\Lambda(o,o')$, $k(A)=\overline{\lambda}(o,o')$,
then $A\subseteq k(A)$ because $A\subseteq \bigcup_{A'\in\Lambda(o,o')}A'$.
\end{proof}

%In some interpretations, it is always possible to reduce a network of ontologies to one made of a single ontology (gathering all axioms of the ontologies) and a single alignment (containing all correspondences) $\langle \{\dot{o}=\cup_{o\in\Omega} o\}, \{ A_{\dot{o},\dot{o}}=\cup_{A\in \Lambda} A\}\rangle$.
%\az[inline]{Not in DDL!}: Indeed it is possible to transcribe, but not in this way.

From subsumption, conjunction (meet) can be introduced in a standard way:
\begin{definition}[Syntactic conjunction of networks of ontologies]
Given a finite family of networks of ontologies, $\{\langle\Omega_i, \Lambda_i\rangle\}_{i\in I}$,
$\bigsqcap_{i\in I}\langle \Omega_i,\Lambda_i\rangle=\langle \Omega',\Lambda'\rangle$
such that $\forall i\in I$, $\langle \Omega',\Lambda'\rangle\sqsubseteq\langle\Omega_i, \Lambda_i\rangle$ and
$\forall \langle \Omega'',\Lambda''\rangle$; $\langle \Omega'',\Lambda''\rangle\sqsubseteq\langle\Omega_i, \Lambda_i\rangle$, $\langle \Omega'',\Lambda''\rangle\sqsubseteq\langle\Omega', \Lambda'\rangle$.
\end{definition}
Such a conjunction always exists because the empty network of ontologies is subsumed by all network of ontologies.
However, it is likely not unique because of the choice of homomorphisms.

We can also define simple operations on networks of ontologies.

\begin{definition}[Substitution in networks of ontologies]
Given a network of ontologies $\langle\Omega, \Lambda\rangle$, given $o\in\Omega$ and $o'$ another ontology,
$\langle\Omega, \Lambda\rangle[o/o']=\langle\Omega\setminus\{o\}\cup\{o'\}, \Lambda\setminus\bigcup_{o''\in\Omega} (\Lambda(o,o'')\cup\Lambda(o'',o))\rangle$.

Given a network of ontologies $\langle\Omega, \Lambda\rangle$, given $A\in\Lambda(o,o')$ and $A'$ another alignment between $o$ and $o'$,
$\langle\Omega, \Lambda\rangle[A/A']=\langle\Omega, \Lambda\setminus\{A\}\cup\{A'\}\rangle$.
\end{definition}

\begin{property}
Given a network of ontologies $\langle\Omega, \Lambda\rangle$ with $o\in\Omega$ and $A\in\Lambda$, 
if $o'\subseteq o$, then $\langle\Omega, \Lambda\rangle[o/o']\sqsubseteq\langle\Omega, \Lambda\rangle$ and if $A'\subseteq A$, then $\langle\Omega, \Lambda\rangle[A/A']\sqsubseteq\langle\Omega, \Lambda\rangle$
\end{property}
\begin{proof}
Simply, there exists a pair of morphisms $\langle h, k\rangle$ which identifies each ontology to itself but $o'$ which is identified to $o$, and each alignment to itself, but $A'$ which is identified to $A$ in the second case.
In the case of $\langle\Omega, \Lambda\rangle[o/o']$, the set of alignments is strictly included in $\Lambda$.
It is thus clear that $h(o'')\subseteq o''$ and $k(A'')\subseteq A''$, including for $o'$ and $A'$ (by hypothesis).
The structure is preserved because ontologies and alignments are the same, except in the second case in which the substituted alignment preserves the structure.
\end{proof}

}{}

%===============================================================

\section{Semantics of networks of ontologies}\label{sec:abssem}

The semantics of aligned ontologies, or networks of ontologies, must remain compatible with the classical semantics of ontologies: connecting ontologies to other ontologies should not radically change the manner to interpret them.

When ontologies are independent, i.e., not related with alignments, it is natural that their semantics is the classical semantics for these ontologies, i.e., a set of models $\mathcal{M}(o)$.
A model is a map $m$ from the entities of the ontologies to a particular domain $D$. 
%Very often, this domain contains the powerset and the powerset of the product of a particular universe of discourse (classes are interpreted on the powerset and relations on the product).
Such models have to apply to all the elements of the entity language $Q_L(o)$ (when it is larger than the ontology language, this is usually defined inductively on the structure of its elements).
%Models have to apply to entity languages used in the alignments.
%Hence, the definition of consequence that we use can be computed with one of the alternative definitions.

Different semantics provide alternative ways to record the constraints imposed by alignments: through relations between domains of interpretation \cite{ghidini1998a,borgida2003a}, through equalising functions \cite{zimmermann2006b,zimmermann2008c}, by imposing equal \cite{lenzerini2002a} or disjoint \cite{cuencagrau2006a} domains.
These models have been compared elsewhere \cite{zimmermann2006b}; we provide an informal unified view of these semantics.

For that purpose, each correspondence is interpreted with respect to three features: 
a model for each ontology and a semantic structure, denoted by $\Delta$ \cite{zimmermann2013a}.
This loosely defined semantic structure has two purposes:
\begin{itemize*}
\item providing an interpretation to the correspondence relations in $\Theta$ (which are independent from the ontology semantics);
\item memorising the constraints imposed on models by the alignments.
%enforcing further constraints on the models.
\end{itemize*}
In this work, it can simply be considered that $\Delta$ is used, in each semantics, to define the satisfaction of a correspondence $\mu$ by two ontology models $o$ and $o'$ (which is denoted by $m_o, m_{o'}\models_{\Delta} \mu$).

%Correspondences in the alignment impose constraints to the set of acceptable models of the related ontologies, i.e., it selects those models which are compatible with the models of the other ontologies through the alignment.
%The semantics of correspondences is given with regard to the semantics of relations as:

%\begin{definition}[Satisfied correspondence]\label{def:satcell}
%A correspondence $\mu=\langle e, e', r\rangle$ is satisfied
%by two models $m$, $m'$ of $o$, $o'$ for some semantic structure $\Delta$ if and only if
%\[ \langle m(e), m'(e')\rangle \in r^{\Delta}, \]
%\noindent such that $r^{\Delta}$ provides the interpretation of the relation $r$ in the structure.
%This is denoted by $m, m'\models_{\Delta} \mu$\index{$\models$!correspondence satisfaction}.
%\end{definition}
%
%Because relations in $\Theta$ are not necessarily part of the ontology language, their semantics is given independently from the semantics of ontologies.
%The $\Delta$ structure provides this capability.

Such a simple notion of satisfaction, imposing no constraints on models, is provided by Example~\ref{ex:relations}.

\begin{example}[Interpretation of correspondences]\label{ex:relations}
In the language used as example, $c$ and $c'$ stand for classes and $i$ and $i'$ for individuals.
If $m_o$ and $m_{o'}$ are respective models of $o$ and $o'$:
\begin{align*}
m_o, m_{o'}\models_{\Delta} \langle c, c', =\rangle &\text{ iff } m_o(c)= m_{o'}(c')\\
m_o, m_{o'}\models_{\Delta} \langle c, c', \leq\rangle &\text{ iff } m_o(c)\subseteq m_{o'}(c')\\
m_o, m_{o'}\models_{\Delta} \langle c, c', \geq\rangle &\text{ iff } m_o(c)\supseteq m_{o'}(c')\\
m_o, m_{o'}\models_{\Delta} \langle i, c', \in\rangle &\text{ iff } m_o(i)\in m_{o'}(c')\\
m_o, m_{o'}\models_{\Delta} \langle c, i', \ni\rangle &\text{ iff } m_{o'}(i')\in m_o(c)\\
m_o, m_{o'}\models_{\Delta} \langle c, c', \bot\rangle &\text{ iff } m_o(c)\cap m_{o'}(c')=\emptyset %\\
\end{align*}
\end{example}

\onlyrr{
\begin{example}[Interpretation of correspondences in first-order logic]\label{ex:folint}
The semantics can be given with respect to first-order logic theories.
In such a case, correspondences relate predicates $p$ and $p'$ of the same arity between ontologies $o$ and $o'$.
If $m_o$ and $m_{o'}$ are their respective first-order models:
\begin{align*}
m_o, m_{o'}\models_{\Delta} \langle p, p', =\rangle &\text{ iff } m_o(p)=m_{o'}(p')\\
m_o, m_{o'}\models_{\Delta} \langle p, p', \leq\rangle &\text{ iff } m_o(p)\subseteq m_{o'}(p')\\
m_o, m_{o'}\models_{\Delta} \langle p, p', \geq\rangle &\text{ iff } m_o(p)\supseteq m_{o'}(p')\\
m_o, m_{o'}\models_{\Delta} \langle p, p', \bot\rangle &\text{ iff } m_o(p)\cap m_{o'}(p')=\emptyset %\\
\end{align*}
This semantics selects first-order theory interpretations by setting constraints on predicate interpretations.
\end{example}

\begin{example}[Interpretation of correspondences in the equalising semantics]\label{ex:equalisingint}
An alternative interpretation in the equalising semantics \cite{zimmermann2008c} relies on a family of functions 
indexed by each ontology  $\gamma$ from the domains of interpretations of each ontologies to a universal domain $U$.
Hence, $\Delta=\langle \gamma, U\rangle$.
Then, in the case of Example~\ref{ex:relations}, if $m_o$ and $m_{o'}$ are respective models of $o$ and $o'$:
\begin{align*}
m_o, m_{o'}\models_{\Delta} \langle c, c', =\rangle &\text{ iff } \gamma_o\circ m_o(c)= \gamma_{o'}\circ m_{o'}(c')\\
m_o, m_{o'}\models_{\Delta} \langle c, c', \leq\rangle &\text{ iff } \gamma_o\circ m_o(c)\subseteq \gamma_{o'}\circ m_{o'}(c') \text{ or } \gamma_o\circ m_o(c)\in \gamma_{o'}\circ m_{o'}(c')\\
m_o, m_{o'}\models_{\Delta} \langle c, c', \geq\rangle &\text{ iff } \gamma_o\circ m_o(c)\supseteq \gamma_{o'}\circ m_{o'}(c') \text{ or }\gamma_{o'}\circ m_{o'}(c')\in \gamma_{o}\circ m_{o}(c)\\
m_o, m_{o'}\models_{\Delta} \langle i, c', \in\rangle &\text{ iff } \gamma_o\circ m_o(i)\in \gamma_{o'}\circ m_{o'}(c') \text{ or } \gamma_o\circ m_o(i)\subseteq \gamma_{o'}\circ m_{o'}(c')
%m_o, m_{o'}\models_{\Delta} \langle c, i', \ni\rangle &\text{ iff } \gamma_{o'}\circ m_{o'}(i')\in \gamma_o\circ m_o(c)\\
%m_o, m_{o'}\models_{\Delta} \langle c, c', \bot\rangle &\text{ iff } \gamma_o\circ m_o(c)\cap \gamma_{o'}\circ m_{o'}(c')=\emptyset \text{ or } \gamma_o\circ m_o(c)\not\ \gamma_{o'}\circ m_{o'}(c')\\
\end{align*}
This semantics allows for changing the interpretation of an individual as a set and vice-versa.
\end{example}
}{}

%\begin{definition}[Models of aligned ontologies]\index{model!of aligned ontologies|emph}
%Given two ontologies $o$ and $o'$ and an alignment $A$ between these ontologies, a model
%of these aligned ontologies is a pair 
%$\langle m, m'\rangle\in \mathcal{M}(o)\times \mathcal{M}(o')$, 
%such that $\forall \mu\in A$, $m, m'\models \mu$ (denoted by $m, m'\models A$).
%\end{definition}

Hence, the semantics of two aligned ontologies may be given as a set of models which are pairs of compatible models. 

\begin{definition}[Models of alignments]\index{model!alignment -|emph}\index{alignment!model|emph}
Given two ontologies $o$ and $o'$ and an alignment $A$ between these ontologies, a model
of this alignment is a triple
$\langle m_o, m_{o'}, \Delta\rangle$ with $m_o\in\mathcal{M}(o)$, $m_{o'}\in\mathcal{M}(o')$, and $\Delta$ a semantic structure,
such that $\forall \mu\in A$, $m_o, m_{o'}\models_{\Delta} \mu$ (denoted by $m_o, m_{o'}\models_{\Delta} A$).
\end{definition}

We note $A\models\mu$ iff $\forall \langle m_o, m_{o'}, \Delta\rangle$ such that $m_o, m_{o'}\models_{\Delta} A$, $m_o, m_{o'}\models_{\Delta}\mu$.
% CNALPHA
Similarly as for ontologies, the semantics of alignments can be given by the relation $\models$ such that ($A$ and $A'$ are alignments and $\mu$ and $\nu$ are correspondences all between the same pair of ontologies):
\begin{description*}
\item[extensivity] $\{\mu\}\models\mu$
\item[monotony] if $A\models\mu$ then $A\cup A'\models\mu$
\item[idempotency] if $A\models\mu$ and $A\cup\{\mu\}\models\nu$ then $A\models\nu$
\end{description*}
Similarly, we assume a consequence closure function $Cn^{\alpha}(A)=\{\mu\mid A\models\mu\}$.

Models of networks of ontologies extend models of alignments.
They select compatible models for each ontology in the network \cite{ghidini2001a}.
Compatibility consists of satisfying all the alignments of the network.

\begin{definition}[Models of networks of ontologies]\label{def:modelnoe}
Given a network of ontologies
$\langle\Omega,\Lambda\rangle$, a model of $\langle\Omega,\Lambda\rangle$ is a pair
$\langle m, \Delta\rangle$ with $m$ a family of models indexed by $\Omega$ with $\forall o\in\Omega$, $m_o\in\mathcal{M}(o)$
such that for each
alignment $A\in\Lambda(o,o')$, $m_o, m_{o'}\models_{\Delta} A$.
The set of models of $\langle\Omega,\Lambda\rangle$ is denoted by $\mathcal{M}(\langle\Omega,\Lambda\rangle)$.
\end{definition}

In that respect, alignments act as model filters for the ontologies.
They select the ontology interpretations which are coherent with the alignments.
This allows for transferring information from one ontology to another since reducing the set of models entails more consequences in each aligned ontology.

\begin{example}[Model of a network of ontologies]\label{ex:almodel}
Hence, a model for the network of ontologies of Figure~\ref{fig:aligned} with $\Delta$ as defined in Example~\ref{ex:relations}, is $\langle \{m_1, m_2, m_3\}, \Delta\rangle$ built on any models $m_1$, $m_2$ and $m_3$ of ontology $o_1$, $o_2$ and $o_3$ such that
$m_3(e_3)\subseteq m_1(b_1)\subseteq m_2(d_2)$, $m_2(c_2)\subseteq m_3(b_3)$ and $m_3(f_3)\subseteq m_1(e_1)$.
\end{example}

\onlyrr{

Standard normalisation provides a semantically equivalent network according to this semantics.
\begin{property}[Soundness of standard normalisation]\label{prop:normcorr}
Let $\langle\Omega, \Lambda\rangle$ be a network of ontologies and $\langle\Omega, \overline{\Lambda}\rangle$ its standard normalisation,
$$\mathcal{M}(\langle\Omega,\Lambda\rangle) = \mathcal{M}(\langle\Omega,\overline{\Lambda}\rangle)$$
\end{property}
\begin{proof}
$\mathcal{M}(\langle\Omega,\Lambda\rangle) = \mathcal{M}(\langle\Omega,\overline{\Lambda}\rangle)$
because each model is made of a set of models of the same ontologies $\Omega$ and a semantic structure $\Delta$ which has to satisfy the same set of correspondences.

More precisely, if $\langle m, \Delta\rangle\in\mathcal{M}(\langle\Omega,\Lambda\rangle)$,
then $\forall o, o'\in \Omega, \forall A\in\Lambda(o,o'), \forall \mu\in A, m_o, m_{o'}\models_{\Delta}\mu$ 
which means that $\forall o, o'\in \Omega, \forall \mu\in\overline{\lambda}(o,o'), m_o, m_{o'}\models_{\Delta}\mu$
because either $\Lambda(o,o')=\varnothing$ and then $\overline{\lambda}(o,o')$ does not contain any $\mu$
or $\forall \mu\in\overline{\lambda}(o,o'), \exists A\in\Lambda(o,o')$ such that $\mu\in A$ and thus $m_o, m_{o'}\models_{\Delta}\mu$.
Hence, $\langle m, \Delta\rangle\in\mathcal{M}(\langle\Omega,\overline{\Lambda}\rangle)$

In the reverse direction, if $\langle m, \Delta\rangle\in\mathcal{M}(\langle\Omega,\overline{\Lambda}\rangle)$,
then $\forall o, o'\in \Omega$, $\forall \mu\in\overline{\lambda}(o,o')$, $m_o, m_{o'}\models_{\Delta}\mu$,
but $\forall o, o'\in \Omega, \forall A\in\Lambda(o,o'), \forall \mu\in A$, $\mu\in\overline{\lambda}(o,o')$ because $\overline{\lambda}(o,o')=\bigcup_{A\in\Lambda(o,o')}A$,
thus $m_o, m_{o'}\models_{\Delta}\mu$.
Hence, $\langle m, \Delta\rangle\in\mathcal{M}(\langle\Omega,\Lambda\rangle)$.
\end{proof}
This justifies the position to only consider normalised networks of ontologies.

% JE: keep the directly compare order (R2)
We consider an order relation $\ll$ between semantic structures denoting the reinforcement of constraints (more complete relations between domains of interpretations or more disjunctions between domains).
The stronger the semantic structure, the less models it accepts.
\begin{definition}[Constraint reinforcement]\label{def:deltaorder}
Given two semantic structures $\Delta$ and $\Delta'$,
$$\Delta \ll \Delta'\text{ iff } \forall m, m', \forall\mu, m, m'\models_{\Delta'}\mu \Rightarrow m, m'\models_{\Delta}\mu$$
\end{definition}

The models of a network and that of its standard normalisation can be compared because they are indexed by the same set of ontologies.
However, it is not generally possible to directly compare two sets of models of two networks because the set of ontologies that index them is not the same.
Therefore, this is again done up to homomorphism.

\begin{definition}[Model inclusion]\label{def:modelnoeincl}
Given two networks of ontologies $\langle \Omega,\Lambda\rangle$ and $\langle \Omega',\Lambda'\rangle$,
the set of models of the latter is said included in that of the former, and denoted by $\mathcal{M}(\langle \Omega',\Lambda'\rangle)\trianglelefteq\mathcal{M}(\langle \Omega,\Lambda\rangle)$, if and only if
there exists a map $h: \Omega\longrightarrow\Omega'$ such that $\forall \langle m', \Delta'\rangle\in\mathcal{M}(\langle\Omega',\Lambda'\rangle)$, $\exists \langle m, \Delta\rangle\in\mathcal{M}(\langle\Omega,\Lambda\rangle)$; $\forall o\in\Omega, m_{o}=m'_{h(o)}$ and $\Delta\ll\Delta'$.
\end{definition}

%\je[inline]{It would even be better if instead of $\Delta\ll\Delta'$ we had $\Delta \approx \Delta'|_{\Omega,\Lambda}$}
% Unused!
%We note $\mathcal{M}(\langle\Omega,\Lambda\rangle) \cong \mathcal{M}(\langle\Omega',\Lambda'\rangle)$ when
%$\mathcal{M}(\langle\Omega,\Lambda\rangle) \trianglelefteq \mathcal{M}(\langle\Omega',\Lambda'\rangle)$
%and
%$\mathcal{M}(\langle\Omega',\Lambda'\rangle) \trianglelefteq \mathcal{M}(\langle\Omega,\Lambda\rangle)$.
Property~\ref{prop:monomod} shows that syntactically subsumed networks of ontologies have more models.

\begin{property}[Model antitony]\label{prop:monomod}
% Not necessary of normalise, normalisation may be useful for the reverse... but not really.
%For $\langle\Omega,\Lambda\rangle$ and $\langle\Omega',\Lambda'\rangle$ two normalised networks of ontologies,
Let $\langle\Omega, \Lambda\rangle$ and $\langle\Omega', \Lambda'\rangle$ be two networks of ontologies,
$$\langle\Omega,\Lambda\rangle \sqsubseteq \langle\Omega',\Lambda'\rangle \Rightarrow \mathcal{M}(\langle\Omega',\Lambda'\rangle)\trianglelefteq \mathcal{M}(\langle\Omega,\Lambda\rangle)$$
\end{property}
\begin{proof}
$\langle\Omega,\Lambda\rangle \sqsubseteq \langle\Omega',\Lambda'\rangle$ 
means that
$\exists\langle h,k\rangle$; $\forall o\in\Omega, o\subseteq h(o)$ and $\forall o, o'\in \Omega$, $\forall A\in\Lambda(o,o')$, $A\subseteq k(A)\wedge k(A)\in\Lambda'(h(o),h(o'))$.
Hence, $\exists\langle h,k\rangle$; $\forall m\in\mathcal{M}(h(o))$, $m\in\mathcal{M}(o)$ and $\forall \langle m,m'\rangle\in\mathcal{M}(h(o))\times\mathcal{M}(h(o'))$, 
$\forall \Delta'$, $m, m'\models_{\Delta'} k(A)\Rightarrow m,m'\models_{\Delta'} A$ (because $A\subseteq k(A)$).
In addition, because $A$ imposes less constraints on the models than $k(A)$, models may be defined with $\Delta\ll\Delta'$,
but in such a case, $m,m'\models_{\Delta} A$.
Thus, there exists a map $h: \Omega\longrightarrow\Omega'$ such that $\forall \langle m', \Delta'\rangle\in\mathcal{M}(\langle\Omega',\Lambda'\rangle)$, $\exists \langle m, \Delta\rangle\in\mathcal{M}(\langle\Omega,\Lambda\rangle)$; $\forall o\in\Omega, m_{o}=m'_{h(o)}$ and $\Delta\ll\Delta'$.
In consequence, $\mathcal{M}(\langle \Omega',\Lambda'\rangle)\trianglelefteq\mathcal{M}(\langle \Omega,\Lambda\rangle)$.
\end{proof}

\begin{property}[Downward consistency preservation]\label{prop:concur}
Let $\langle\Omega, \Lambda\rangle$ and $\langle\Omega', \Lambda'\rangle$ be two networks of ontologies,
If $\langle\Omega,\Lambda\rangle \sqsubseteq \langle\Omega',\Lambda'\rangle$
and $\langle\Omega',\Lambda'\rangle$ is consistent,
then $\langle\Omega,\Lambda\rangle$ is consistent.
\end{property}
\begin{proof}
Straightforward from Property~\ref{prop:monomod}, since if $\langle\Omega',\Lambda'\rangle$ is consistent, it has a model, and so does $\langle\Omega,\Lambda\rangle$.
\end{proof}
}{
}

It is expected that all constraints applying to the semantics are preserved by the syntactic morphisms.
However, the converse is not guarantee: if a network preserves the constraints of another then there does not necessarily imply that there exist a syntactic morphism.
% JE2014: This in fact justifies the following because it leads to introducing semantic morphisms...

\section{Consistency, consequence and closure}\label{sec:cons}

A network of ontologies is consistent if it has a model. 
By extension, an ontology or an alignment is consistent within a network of ontologies if the network of ontologies is consistent. 
Hence even if an ontology is consistent when taken in isolation, it may be inconsistent when inserted in a network of ontologies.
Moreover, if one of the ontologies in the network is inconsistent, then the network as a whole is inconsistent.

\begin{example}[Inconsistency]\label{ex:inconsistency}
A model of the network of ontologies presented in Example~\ref{ex:noo} according to the interpretation of correspondence of Example~\ref{ex:relations}, retains families of models $\{m_1, m_2, m_3\}$
satisfying all alignments, i.e., in particular, satisfying:
\begin{align*}
m_3(b_3) &\supseteq m_2(c_2)\\
m_2(c_2) &\supseteq m_1(b_1)\\
m_1(b_1) &\supseteq m_3(e_3)
\end{align*}
But, all models $m_3$ of $o_3$ must satisfy $m_3(b_3)\cap m_3(c_3)=\emptyset$, $m_3(i)\in m_3(e_3)$, and $m_3(i)\in m_3(c_3)$.
Moreover, all models $m_2$ of $o_2$ must satisfy $m_2(d_2)\subseteq m_2(c_2)$.
Hence, $m_3(b_3)\supseteq m_1(b_1)$, $m_3(b_3)\supseteq m_3(e_3)$ and then $m_3(i)\in m_3(b_3)$, which is contradictory with previous assertions.
Thus, there cannot exists such a family of models and there is no model for this network of ontologies.

If the interpretation of Example~\ref{ex:equalisingint} were retained, there may be models in which $\gamma\circ m_3(i)=\emptyset$.

In this example, taking any of the ontologies with only the alignments which involve them, e.g., $\langle\Omega, \{A_{1,3}, A_{2,3}\}\rangle$, is a consistent network of ontologies.
The following examples will also consider the network of ontologies $\langle\Omega',\Lambda\rangle$=$\langle\{o_1, o_2, o'_3\},$ $\{A_{1,2}, A_{1,3}, A_{2,3}\}\rangle$ such that $o'_3$ is $o_3\setminus\{i \sqin e_3\}$.
\end{example}
%There is no consequence relation for a network of ontologies. In fact, we
So far, we have not defined what it means for a formula to be the consequence of a network. 
% JE: This was something else...
%We have already discussed in \cite{euzenat2013c} that there even can be several definitions for consequence in one ontology involved in a network.
There are two notions of consequences called $\omega$-consequence and $\alpha$-consequence.

$\alpha$-consequences are correspondences which are consequences of networks of ontologies \cite{euzenat2007a}. 

\begin{definition}[$\alpha$-Consequence of networks of ontologies]\label{def:alphacons}
Given a finite set of ontologies $\Omega$ and a finite set of alignments $\Lambda$
between pairs of ontologies in $\Omega$, a correspondence $\mu$ between two ontologies $o$ and $o'$ in $\Omega$ is an $\alpha$-consequence of $\langle\Omega,\Lambda\rangle$ (denoted by $\models_{\Omega,\Lambda} \mu$ or $\langle\Omega,\Lambda\rangle\models \mu$)
if and only if for all models $\langle m, \Delta\rangle$ of $\langle\Omega,\Lambda\rangle$, $m_o, m_{o'}\models_{\Delta} \mu$.
\end{definition}

The set of $\alpha$-consequences between $o$ and $o'$ is denoted by $Cn^{\alpha}_{\Omega,\Lambda}(o,o')$. 
For homogeneity of notation, we will use $Cn^{\alpha}_{\Omega,\Lambda}(A)$ for denoting $Cn^{\alpha}_{\Omega,\Lambda}(o,o')$ when $A\in\Lambda(o,o')$.
The $\alpha$-closure of a network of ontologies is its set of $\alpha$-consequences: the correspondences which are satisfied in all models of the network of ontologies.

From the alignment semantics, it is possible to decide if an alignment is a consequence of another or if the alignment makes the set of ontologies and alignments inconsistent.

\begin{example}[$\alpha$-consequences]\label{ex:alpha}
% JE: this is a nice intermediary where a thrird kind of set may be used 
%It may be possible to compute consequences of \ldots (with composition)?
%In fact the only one for which this is possible is $Cn^{\alpha}_{\Omega,\Lambda}(o_1,o_3)= A_{2,3}\cup\{d_1\geq e_3\}$.
The closure of $A_{1,3}$ in the network of ontology $\langle\Omega', \Lambda\rangle$ of Example~\ref{ex:inconsistency} is:
$$Cn^{\alpha}_{\Omega',\Lambda}(o_1,o'_3)= \left\lbrace \begin{array}{r@{~}l@{~~~}r@{~}l}
e_1 \geq f_3, & b_1 \geq e_3\\
c_1 \geq f_3, & a_1 \geq f_3\\
a_1 \geq e_3, & b_1 \leq b_3\\
b_1 \leq a_3, & b_1 \bot c_3\\
b_1 \bot d_3, & b_1 \bot e_3
\end{array} \right\rbrace$$
but if the network is reduced to the two involved ontologies ($o_1$ and $o'_3$) only, the closure would be:
$$Cn^{\alpha}_{\{o_1,o'_3\},\{A_{1,3}\}}(o_1,o'_3)= \left\lbrace \begin{array}{r@{~}l@{~~~}r@{~}l}
e_1 \geq f_3, & b_1 \geq e_3\\
c_1 \geq f_3, & a_1 \geq f_3\\
a_1 \geq e_3
\end{array} \right\rbrace$$
% JE: do not add to after connecting (R2)
It is thus clear that connecting more ontologies provides more information.
\end{example}

% CBALPHA
According to these definitions, $Cn^{\alpha}(A)=Cn^{\alpha}_{\langle \{o,o'\}, \{A\}\rangle}(A)$ when $A\in\Lambda(o,o')$.
$\alpha$-consequences of an alignment are defined as the $\alpha$-consequences of the network made of this alignment and the two ontologies it connects.
% CBALPHA
The $\alpha$-consequences of a particular alignment are usually larger than the alignment ($\forall A\in\Lambda, A\subseteq Cn^{\alpha}(A)\subseteq Cn^{\alpha}_{\Omega,\Lambda}(A)$). 
If the alignment is not satisfiable, then any correspondence is one of its $\alpha$-consequences.

%\begin{property}\label{prp:alphaclos}
%$Cn^{\alpha}_{\Omega,\Lambda}$ is a closure operation\footnote{A closure operation satisfies three properties: 
%$\forall X, Y\in S:$ $X\subseteq Cn(X)$, 
%$Cn(X)=Cn(Cn(X))$, 
%$X\subseteq Y\Rightarrow Cn(X)\subseteq Cn(Y)$.}.
%\end{property}
%\begin{proof}
%We consider the operation $Cn^{\alpha}_{\Omega,\Lambda}$ applied to an alignment (in principle $\bigcup_{A\in \Lambda(o,o')} A$).
%$\forall A, B$ alignments between $o_i$ and $o_j$:
%\begin{itemize*}
%\item $A\subseteq Cn^{\alpha}_{\Omega,\Lambda}(A)$, because 
%%$\forall A\in\Lambda()$, 
%$\forall\mu\in A$, $\forall m\in \mathcal{M}(\langle\Omega,\Lambda\rangle), m_i, m_j\models\mu$, hence $\mu\in Cn^{\alpha}_{\Omega,\Lambda}(A)$
%\item $Cn^{\alpha}_{\Omega,\Lambda}(A)=Cn^{\alpha}_{\Omega,\Lambda}(Cn^{\alpha}_{\Omega,\Lambda}(A))$, 
%\item $A\subseteq B\Rightarrow Cn^{\alpha}_{\Omega,\Lambda}(A)\subseteq Cn^{\alpha}_{\Omega,\Lambda}(B)$
%\end{itemize*}
%\end{proof}

Similarly, the
$\omega$-consequences of an ontology in a network are formulas that are satisfied in all models of the ontology selected by the network.

\begin{definition}[$\omega$-Consequence of an ontology in a network of ontologies]
Given a finite set of ontologies $\Omega$ and a finite set of alignments $\Lambda$ 
between pairs of ontologies in $\Omega$, a formula $\delta$ in the ontology language of $o\in\Omega$ is an $\omega$-consequence of $o$ in $\langle\Omega,\Lambda\rangle$ (denoted by $o\models_{\Omega,\Lambda} \delta$) if 
and only if for all models $\langle m, \Delta\rangle$ of $\langle\Omega,\Lambda\rangle$, $m_o\models \delta$ (the set of $\omega$-consequences of $o$ is denoted by $Cn^{\omega}_{\Omega,\Lambda}(o)$).
\end{definition}

The $\omega$-closure of an ontology is the set of its $\omega$-consequences.
%\begin{property}\label{prp:prp:omegaclos}
%$Cn^{\omega}_{\Omega,\Lambda}$ is a closure operation.
%\end{property}
%\begin{proof}
%Here again, the proof is made from the standpoint of a particular ontology $o_i\in\Omega$ that may be replaced by its closure or another ontology.
%$\forall o_i, o'_i$ ontologies:
%\begin{itemize*}
%\item $o_i\subseteq Cn^{\omega}_{\Omega,\Lambda}(o_i)$, because $\forall \delta\in o, \forall m\in \mathcal{M}(\langle\Omega,\Lambda\rangle), m_i\models o_i$, hence $m_i\models\delta$,
%\item $Cn^{\omega}_{\Omega,\Lambda}(o_i)=Cn^{\omega}_{\Omega,\Lambda}(Cn^{\omega}_{\Omega,\Lambda}(o_i))$, 
%\item $o_i\subseteq o'_i\Rightarrow Cn^{\omega}_{\Omega,\Lambda}(o_i)\subseteq Cn^{\omega}_{\Omega,\Lambda}(o'_i)$ because, $\forall \delta\in Cn^{\omega}_{\Omega,\Lambda}(o_i)$, $\forall m_i\in\mathcal{M}(o_i), m_i\models \delta$, but $\mathcal{M}(o_i)\supseteq\mathcal{M}(o'_i)$ hence $\forall m'_i\in\mathcal{M}(o'_i), m'_i\models \delta$, so $\delta\in Cn^{\omega}_{\Omega,\Lambda}(o'_i)$.
%\end{itemize*}
%\end{proof}
%$$Cn^{\omega}_{\Omega,\Lambda}(o_i)=\bigcap_{\langle m_1,\dots m_n\rangle\in\mathcal{M}(\Omega,\Lambda)} \{\delta\mid m_i\models\delta\}$$
% CBALPHA
According to these definitions, $Cn^{\omega}(o)=Cn^{\omega}_{\langle \{o\}, \emptyset\rangle}(o)$.
These $\omega$-consequences are
% CBALPHA
larger than the classical consequences of the ontology ($\forall o\in\Omega, o\subseteq Cn^{\omega}(o)\subseteq Cn^{\omega}_{\Omega,\Lambda}(o)$)
because they rely on a smaller set of models.
%This definition is semantic: usually, closure operators are defined syntactically.
%\je{PROVE This deserves explanation}

\begin{example}[$\omega$-consequences]\label{ex:omega}
The simple consequences of the ontology $o'_3$ are:
% CBALPHA
$$Cn^{\omega}(o'_{3}) = \left\lbrace \begin{array}{r@{~}l@{~~~}r@{~}l}
b_3 \sqsubseteq a_3, & c_3 \sqsubseteq a_3, & g_3 \sqsubseteq b_3\\ 
d_3 \sqsubseteq c_3, & e_3 \sqsubseteq c_3, & f_3 \sqsubseteq b_3\\
f_3 \bot c_3, & b_3 \bot c_3 & f_3 \sqsubseteq a_3,\\
d_3 \sqsubseteq a_3, & e_3 \sqsubseteq a_3,  & g_3 \bot c_3\\
 d_3 \bot b_3, & e_3 \bot b_3 & g_3 \sqsubseteq a_3
%i \sqin a_3, & i \sqin e_3, & i \sqin c_3
\end{array} \right\rbrace$$
\noindent while within $\langle \Omega', \Lambda\rangle$ of Example~\ref{ex:inconsistency}, there are even more consequences:
$$Cn^{\omega}_{\Omega',\Lambda}(o'_{3}) = \left\lbrace \begin{array}{r@{~}l@{~~~}r@{~}l}
b_3 \sqsubseteq a_3, & c_3 \sqsubseteq a_3, & g_3 \sqsubseteq b_3\\ 
d_3 \sqsubseteq c_3, & e_3 \sqsubseteq c_3, & f_3 \sqsubseteq b_3\\
f_3 \bot c_3, & b_3 \bot c_3 & f_3 \sqsubseteq a_3,\\
d_3 \sqsubseteq a_3, & e_3 \sqsubseteq a_3, & g_3 \bot c_3,  \\
d_3 \bot b_3, & e_3 \bot b_3 & g_3 \sqsubseteq a_3\\
& b_3 \sqsupseteq e_3\\
% i \sqin b_3 & i \sqin e_3, & i \sqin a_3, & i \sqin c_3
\end{array} \right\rbrace$$
\end{example}

\onlyrr{
From the notion of consequence, we introduce semantic morphism.
% which could be used for defining semantic subsumption.

\begin{definition}[Semantic morphism between networks of ontologies]\label{def:semmorph}
Given two networks of ontologies, $\langle\Omega, \Lambda\rangle$ and $\langle \Omega',\Lambda'\rangle$,
a \emph{semantic morphism} between $\langle\Omega, \Lambda\rangle$ and $\langle \Omega',\Lambda'\rangle$, is
$\exists \langle h, k\rangle$, a pair of morphisms: $h:\Omega\longrightarrow\Omega'$ and $k:\Lambda\longrightarrow\Lambda'$ such that
$\forall o\in\Omega$, $\exists h(o)\in\Omega'$ and $h(o)\models_{\Omega',\Lambda'} o$ and 
$\forall A\in\Lambda(o,o')$, $\exists k(A)\in \Lambda'(h(o),h(o'))$ and $k(A)\models_{\Omega',\Lambda'}A$.
\end{definition}
}{}

We can also define the closure of a network of ontologies by the network of ontologies which replaces each ontology by its $\omega$-closure and each
alignment by its $\alpha$-closure:
\begin{align*}
Cn(\langle\Omega,\Lambda\rangle  ) &= 
\langle  \{Cn^{\omega}_{\Omega, \Lambda } (o)\}_{o\in \Omega},
 \{Cn^{\alpha}_{\Omega, \Lambda}(o,o')\}_{o, o'\in\Omega} \rangle
\end{align*}
\onlyrr{
or alternatively:
\[ 
%Cn(\langle\Omega,\Lambda\rangle  ) = 
\langle  \{\{ \delta\mid \forall \langle m,\Delta\rangle\in\mathcal{M}(\langle\Omega, \Lambda\rangle), m_o\models\delta\}\}_{o\in\Omega},
\{\{ \mu\mid \forall \langle m,\Delta\rangle\in\mathcal{M}(\langle\Omega, \Lambda\rangle), m_o, m_{o'}\models_{\Delta}\mu\}\}_{o, o'\in\Omega}\rangle
 \]
}{}

We also use the notation $Cn^{\alpha}_{\langle\Omega,\Lambda\rangle}$ for $Cn^{\alpha}_{\Omega,\Lambda}$ and $Cn^{\omega}_{\langle\Omega,\Lambda\rangle}$ for $Cn^{\omega}_{\Omega,\Lambda}$.

\begin{example}[Full network closure]\label{ex:closure}
Here is the closure of the network of ontologies $\langle\Omega', \Lambda\rangle$ of Example~\ref{ex:inconsistency} (the first set is the syntactic form corresponding to the alignment or ontology, the second set is what is added by the local closure and the last set what is added by the $\omega$-closure or $\alpha$-closure):
\begin{align*}
Cn^{\omega}_{\Omega',\Lambda}(o_{1}) = Cn^{\omega}(o_{1}) =& \left\lbrace \begin{array}{r@{~}l} 
b_1 \sqsubseteq a_1, & c_1 \sqsubseteq a_1,\\ 
d_1 \sqsubseteq c_1, & e_1 \sqsubseteq c_1
\end{array} \right\rbrace
\bigcup
\left\lbrace \begin{array}{r@{~}l}
d_1 \sqsubseteq a_1, & e_1 \sqsubseteq a_1\\
\end{array} \right\rbrace\\
Cn^{\omega}_{\Omega',\Lambda}(o_{2}) = Cn^{\omega}(o_{2}) =& \left\lbrace \begin{array}{r@{~}l}
b_2 \sqsubseteq a_2, & c_2 \sqsubseteq a_2,\\
g_2 \sqsubseteq b_2, & f_2 \sqsubseteq b_2,\\
d_2 \sqsubseteq c_2, & e_2 \sqsubseteq c_2,
\end{array} \right\rbrace
\bigcup
\left\lbrace \begin{array}{r@{~}l}
d_2 \sqsubseteq a_2, & e_2 \sqsubseteq a_2,\\
f_2 \sqsubseteq a_2, & g_2 \sqsubseteq a_2 
\end{array} \right\rbrace\\
Cn^{\omega}_{\Omega',\Lambda}(o'_{3}) =& \left\lbrace \begin{array}{r@{~}l}
b_3 \sqsubseteq a_3,  & c_3 \sqsubseteq a_3,\\
g_3 \sqsubseteq b_3, & d_3 \sqsubseteq c_3,\\
e_3 \sqsubseteq c_3, & f_3 \sqsubseteq b_3,\\
%i \sqin e_3, & 
b_3 \bot c_3 
\end{array} \right\rbrace
\bigcup
\left\lbrace \begin{array}{r@{~}l}
f_3 \sqsubseteq a_3, & g_3 \sqsubseteq a_3,\\
d_3 \sqsubseteq a_3, & e_3 \sqsubseteq a_3,\\
%i \sqin c_3 & i \sqin a_3,\\
d_3 \bot b_3, & e_3 \bot b_3,\\
d_3 \bot f_3, & d_3 \bot g_3,\\
e_3 \bot f_3, & e_3 \bot g_3,\\
f_3 \bot c_3, & g_3 \bot c_3
\end{array} \right\rbrace
\bigcup
%\left\lbrace \begin{array}{r}
%i \sqin b_3, \\
\{b_3\sqsupseteq e_3\}
%\end{array} \right\rbrace
\end{align*}
\noindent The network does not introduce new assertions in the two first ontologies, but the last one receives a new assertion.
Similarly, for alignments, their local closure does not provide new correspondences, but the $\alpha$-closure becomes larger.
These alignment closures are:
\begin{align*}
Cn^{\alpha}_{\Omega',\Lambda}(o_1,o_2)= & \left\lbrace \begin{array}{r}%
b_1 \leq d_2
\end{array} \right\rbrace
\bigcup 
\left\lbrace \begin{array}{r@{~~~}l}
b_1 \leq a_2, & b_1 \leq c_2 
\end{array} \right\rbrace\\
Cn^{\alpha}_{\Omega',\Lambda}(o_2,o'_3)= & \left\lbrace \begin{array}{r}
c_2 \leq b_3
\end{array} \right\rbrace
\bigcup 
\left\lbrace \begin{array}{r@{~~~}c@{~~~}l}
c_2 \leq a_3, & d_2 \leq a_3, & e_2 \leq a_3 \\
d_2 \leq b_3, & e_2 \leq b_3, \\
c_2 \bot c_3,  & c_2 \bot d_3,  & c_2 \bot e_3 \\
d_2 \bot c_3,  & d_2 \bot d_3,  & d_2 \bot e_3 \\
e_2 \bot c_3,  & e_2 \bot d_3,  & e_2 \bot e_3 \\
d_2 \geq e_3, & c_2 \geq e_3, & a_2 \geq e_3%,\\
%d_2 \ni i, & c_2 \ni i, & a_2 \ni i
\end{array} \right\rbrace\\
Cn^{\alpha}_{\Omega',\Lambda}(o_1,o'_3)= & \left\lbrace \begin{array}{r}
e_1 \geq f_3,\\
b_1 \geq e_3
\end{array} \right\rbrace
\bigcup
\left\lbrace \begin{array}{r@{~~~}c@{~~~}l}
c_1 \geq f_3, & a_1 \geq f_3, & a_1 \geq e_3,\\
b_1 \bot c_3, & b_1 \bot d_3, & b_1 \bot e_3,\\
b_1 \leq b_3, & b_1 \leq a_3%,\\
%a_1 \ni i & b_1 \ni i
\end{array} \right\rbrace
\end{align*}
Such a representation is highly redundant as closures usually are.
\end{example}

The closure of a network of ontologies may introduce non empty alignments between ontologies which were not previously connected or empty. 
This is possible because constraints do not come locally from the alignment but from the whole network of ontologies.
%Exercise: prove that $Cn(\langle\Omega,\Lambda\rangle)$ is
%semantically equivalent to $\langle\Omega,\Lambda\rangle$.
Such a formalism contributes to the definition of the meaning of alignments: 
it describes what are the consequences of ontologies with alignments, i.e., what can be deduced by an agent.

\onlyrr{
\begin{property}\label{prop:closure}
$Cn$ is a closure operation on normalised networks of ontologies\footnote{A closure operation $Cn$ in a set $S$ satisfies three properties: 
$\forall X, Y\in S:$ $X\subseteq Cn(X)$, 
$Cn(X)=Cn(Cn(X))$, and
$X\subseteq Y\Rightarrow Cn(X)\subseteq Cn(Y)$.}.
\end{property}

This property can rely on a lemma mirroring Property~\ref{prop:monomod}.
\begin{lemma}[Consequence isotony]\label{prop:monocons}
% Not necessary of normalise, normalisation may be useful for the reverse... but not really.
%For $\langle\Omega,\Lambda\rangle$ and $\langle\Omega',\Lambda'\rangle$ two normalised networks of ontologies,
Let $\langle\Omega, \Lambda\rangle$ and $\langle\Omega', \Lambda'\rangle$ be two networks of ontologies,
$$\langle\Omega,\Lambda\rangle \sqsubseteq \langle\Omega',\Lambda'\rangle \Rightarrow 
Cn(\langle\Omega,\Lambda\rangle) \sqsubseteq Cn(\langle\Omega',\Lambda'\rangle)$$
\end{lemma}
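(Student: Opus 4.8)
The plan is to exhibit a syntactic morphism between the two closures, built directly from the morphism $\langle h, k\rangle$ that witnesses the hypothesis $\langle\Omega,\Lambda\rangle\sqsubseteq\langle\Omega',\Lambda'\rangle$. Writing $Cn(\langle\Omega,\Lambda\rangle)=\langle\hat\Omega,\hat\Lambda\rangle$ with $\hat\Omega=\{Cn^{\omega}_{\Omega,\Lambda}(o)\}_{o\in\Omega}$ and $\hat\Lambda=\{Cn^{\alpha}_{\Omega,\Lambda}(o,o')\}_{o,o'\in\Omega}$, and similarly for the primed network, I would define $\hat h(Cn^{\omega}_{\Omega,\Lambda}(o))=Cn^{\omega}_{\Omega',\Lambda'}(h(o))$ and $\hat k(Cn^{\alpha}_{\Omega,\Lambda}(o,o'))=Cn^{\alpha}_{\Omega',\Lambda'}(h(o),h(o'))$ (picking, where necessary, a representative $o$ of each closed ontology, which is harmless since $\Omega$ is finite). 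Proving that $\langle\hat h,\hat k\rangle$ is a syntactic morphism reduces to the two inclusions $Cn^{\omega}_{\Omega,\Lambda}(o)\subseteq Cn^{\omega}_{\Omega',\Lambda'}(h(o))$ and $Cn^{\alpha}_{\Omega,\Lambda}(o,o')\subseteq Cn^{\alpha}_{\Omega',\Lambda'}(h(o),h(o'))$: the former yields $\hat o\subseteq\hat h(\hat o)$, while the latter, together with the fact that $Cn^{\alpha}_{\Omega,\Lambda}(o,o')$ is by construction the alignment between $Cn^{\omega}_{\Omega,\Lambda}(o)$ and $Cn^{\omega}_{\Omega,\Lambda}(o')$, gives both $\hat A\subseteq\hat k(\hat A)$ and the structure-preservation condition $\hat k(\hat A)\in\hat\Lambda'(\hat h(\hat o),\hat h(\hat o'))$.

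The heart of the argument is a model-restriction step. Given any model $\langle m',\Delta'\rangle\in\mathcal{M}(\langle\Omega',\Lambda'\rangle)$, I claim that $\langle (m'_{h(o)})_{o\in\Omega},\Delta'\rangle$ is a model of $\langle\Omega,\Lambda\rangle$. Indeed, since $o\subseteq h(o)$, monotony gives $\mathcal{M}(h(o))\subseteq\mathcal{M}(o)$, so each $m'_{h(o)}$ is already a model of $o$; and for each $A\in\Lambda(o,o')$ the morphism provides $k(A)\in\Lambda'(h(o),h(o'))$ with $A\subseteq k(A)$, whence $m'_{h(o)},m'_{h(o')}\models_{\Delta'}k(A)$ forces $m'_{h(o)},m'_{h(o')}\models_{\Delta'}A$. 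With this restricted model in hand both inclusions follow at once: if $\delta\in Cn^{\omega}_{\Omega,\Lambda}(o)$ then $m'_{h(o)}\models\delta$ for every $\langle m',\Delta'\rangle$, so $\delta\in Cn^{\omega}_{\Omega',\Lambda'}(h(o))$; and if $\mu\in Cn^{\alpha}_{\Omega,\Lambda}(o,o')$ then $m'_{h(o)},m'_{h(o')}\models_{\Delta'}\mu$ for every $\langle m',\Delta'\rangle$, so $\mu\in Cn^{\alpha}_{\Omega',\Lambda'}(h(o),h(o'))$.

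The step I expect to be the real obstacle is the correspondence ($\alpha$) case, precisely because satisfaction $\models_{\Delta}$ depends on the semantic structure. Invoking Property~\ref{prop:monomod} as a black box only yields a model $\langle m,\Delta\rangle$ of $\langle\Omega,\Lambda\rangle$ with $\Delta\ll\Delta'$, and since $\Delta\ll\Delta'$ means $m,m'\models_{\Delta'}\mu\Rightarrow m,m'\models_{\Delta}\mu$, it runs the wrong way: it would let me pass from $\models_{\Delta'}$ to $\models_{\Delta}$, whereas I must conclude $\models_{\Delta'}$ from $\models_{\Delta}$. The resolution is exactly the direct construction above, which reuses the very same $\Delta'$ as semantic structure for the restricted model (legitimate since $\ll$ is reflexive, $\Delta'\ll\Delta'$), so that no transfer between distinct structures is ever required. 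The $\omega$ case is free of this difficulty because ontology satisfaction $m_o\models\delta$ is independent of $\Delta$, and there Property~\ref{prop:monomod} could be used verbatim. Once both inclusions are established, well-definedness of $\langle\hat h,\hat k\rangle$ and the morphism conditions are routine, yielding $Cn(\langle\Omega,\Lambda\rangle)\sqsubseteq Cn(\langle\Omega',\Lambda'\rangle)$.
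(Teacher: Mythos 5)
Your proof is correct and rests on exactly the same key insight as the paper's: the $h$-restriction of any model $\langle m',\Delta'\rangle$ of the subsuming network, \emph{keeping the same semantic structure} $\Delta'$, is a model of the subsumed network, which immediately yields both $Cn^{\omega}_{\Omega,\Lambda}(o)\subseteq Cn^{\omega}_{\Omega',\Lambda'}(h(o))$ and $Cn^{\alpha}_{\Omega,\Lambda}(o,o')\subseteq Cn^{\alpha}_{\Omega',\Lambda'}(h(o),h(o'))$. The paper packages the $\alpha$-case as a proof by contradiction (and your diagnosis of why Property~\ref{prop:monomod} alone cannot close that case, because $\ll$ points the wrong way, is precisely the issue the paper's contradiction step resolves by constructing $\langle m,\Delta'\rangle$), so your direct formulation is only a cosmetic reorganization of the same argument.
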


\begin{proof}[Proof of Lemma~\ref{prop:monocons}]
% It seems from the beginning of the proof that it is a straightforward derivation from model antimony
% Unfortunately, this property, by using \Delta\ll\Delta' has lost the point that this is true if \Delta'=\Delta
% And this cannot be recovered from there. So we have to reestablish it by the poof par l'absurde.
From Property~\ref{prop:monomod}, we know that there exists $h:\Omega\rightarrow\Omega'$ and that 
$\forall \langle m',\Delta'\rangle\in \mathcal{M}(\langle\Omega',\Lambda'\rangle)$,
$\exists  \langle m,\Delta\rangle\in \mathcal{M}(\langle\Omega,\Lambda\rangle)$;
$\forall o\in\Omega, m_o=m'_{h(o)}$ and $\Delta\ll\Delta'$.
Hence, $\exists h:\Omega\rightarrow\Omega'$ such that
$\forall o\in\Omega$, each model of $h(o)$ is a model of $o$, so 
$\{m_o\mid \langle m,\Delta\rangle\in\mathcal{M}(\langle\Omega, \Lambda\rangle)\}\supseteq \{m'_{h(o)}\mid \langle m',\Delta'\rangle\in\mathcal{M}(\langle\Omega', \Lambda'\rangle)\}$.
This entails $Cn^{\omega}_{\Omega, \Lambda}(o)\subseteq Cn^{\omega}_{\Omega', \Lambda'}(h(o))$.
This also means that $\forall o, o'\in\Omega$, each pair of models $\langle m'_{h(o)}, m'_{h(o')}\rangle$ of $\langle h(o), h(o')\rangle$ 
is also a pair of models of $\langle o, o'\rangle$.
Considering $\mu\in Cn^{\alpha}_{\Omega, \Lambda}(o,o')$, either $\forall \langle m',\Delta'\rangle\in\mathcal{M}(\langle\Omega', \Lambda'\rangle)$, $m'_{h(o)}, m'_{h(o')}\models_{\Delta'}\mu$ and then $\mu\in Cn^{\alpha}_{\Omega', \Lambda'}(h(o),h(o'))$,
or $\exists \langle m',\Delta'\rangle\in\mathcal{M}(\langle\Omega', \Lambda'\rangle)$; $m'_{h(o)}, m'_{h(o')}\not\models_{\Delta'}\mu$.
In this latter case, $\exists \langle m,\Delta'\rangle\in\mathcal{M}(\langle\Omega, \Lambda\rangle)$, such that $m_o=m'_{h(o)}$.
This is a model of $\langle\Omega, \Lambda\rangle$ because this network put less constraints on $\Delta$ than $\langle\Omega', \Lambda'\rangle$.
But this contradicts the hypothesis that $\mu\in Cn^{\alpha}_{\Omega, \Lambda}(o,o')$.
Hence, $Cn^{\alpha}_{\Omega, \Lambda}(o,o')\subseteq Cn^{\alpha}_{\Omega', \Lambda'}(h(o),h(o'))$, so 
$Cn(\langle\Omega,\Lambda\rangle) \sqsubseteq Cn(\langle\Omega',\Lambda'\rangle)$.
\end{proof}

\begin{proof}[Proof of Property~\ref{prop:closure}]
$\forall \langle\Omega,\Lambda\rangle$ and $\langle\Omega',\Lambda'\rangle$ normalised networks of ontologies: 
\begin{itemize*}

\item $\langle\Omega,\Lambda\rangle\sqsubseteq Cn(\langle\Omega,\Lambda\rangle)$, because 
$\forall \langle h,k\rangle$ such that $\forall o,o'\in\Omega$,
$h(o)=Cn^{\omega}_{\Omega,\Lambda}(o)$ and $k(\lambda(o,o'))=Cn^{\alpha}_{\Omega,\Lambda}(o,o')$,
\begin{inparaenum}[($i$)]
\item $o\subseteq Cn^{\omega}_{\Omega,\Lambda}(o)$, because $\forall \delta\in o, \forall \langle m,\Delta\rangle\in \mathcal{M}(\langle\Omega,\Lambda\rangle), m_o\models o$, hence $m_o\models\delta$, so $\delta\in Cn^{\omega}_{\Omega,\Lambda}(o)$; and
\item $\lambda(o,o')\subseteq Cn^{\alpha}_{\Omega,\Lambda}(o,o')$, because 
%$\forall A\in\Lambda()$, 
$\forall\mu\in \lambda(o,o')$, $\forall \langle m,\Delta\rangle\in \mathcal{M}(\langle\Omega,\Lambda\rangle)$, $m_o, m_{o'}\models_{\Delta}\mu$, hence $\mu\in Cn^{\alpha}_{\Omega,\Lambda}(\lambda(o,o'))$.
\end{inparaenum}

\item 
$\forall \langle m,\Delta\rangle\in \mathcal{M}(\langle\Omega, \Lambda\rangle)$, 
\begin{inparaenum}[($i$)]
\item $\forall o\in\Omega, m_o\in\mathcal{M}(o)$ and $\forall o, o'\in\Omega, m_o, m_{o'}\models_{\Delta} \lambda(o, o')$, 
\item $\forall o\in\Omega$, $m_o\in\mathcal{M}(Cn^{\omega}_{\Omega,\Lambda}(o))$, and 
\item $\forall o, o'\in\Omega$, $m_o, m_{o'}\models_{\Delta} Cn^{\alpha}_{\Omega,\Lambda}(o, o')$ 
(the two latter assertions because the closure contains elements true in all models).
\end{inparaenum}
In consequence, $\langle m,\Delta\rangle\in \mathcal{M}(Cn(\langle\Omega,\Lambda\rangle))$, which means that $\mathcal{M}(\langle\Omega, \Lambda\rangle)\subseteq\mathcal{M}(Cn(\langle\Omega,\Lambda\rangle))$, 
% The above conclusion may be directly obtained from the third property...
and thus
$Cn(\langle\Omega,\Lambda\rangle)\sqsupseteq Cn(Cn(\langle\Omega,\Lambda\rangle))$ (less models means more consequences).
% This last statement is true because both networks have the same models, then this will induce the same consequences
By the first clause, $Cn(\langle\Omega,\Lambda\rangle)\sqsubseteq Cn(Cn(\langle\Omega,\Lambda\rangle))$,
so, $Cn(\langle\Omega,\Lambda\rangle)\equiv Cn(Cn(\langle\Omega,\Lambda\rangle))$.

\item Consequence isotony is proved by Lemma~\ref{prop:monocons}
%If $\langle\Omega,\Lambda\rangle\sqsubseteq\langle\Omega',\Lambda'\rangle$, then
%$\exists\langle h,k\rangle$; $\forall o\in\Omega, o\subseteq h(o)$ and $\forall A\in\Lambda(o,o'), A\subseteq k(A)\wedge k(A)\in\Lambda'(h(o),h(o'))$.
%Hence, $\forall m\in\mathcal{M}(h(o)), m\in\mathcal{M}(o)$, so $Cn^{\omega}_{\Omega,\Lambda}(o)\subseteq Cn^{\omega}_{\Omega',\Lambda'}(h(o))$.
%Moreover, $\forall \langle m,m'\rangle\in\mathcal{M}(h(o))\times\mathcal{M}(h(o'))$, $m, m'\models k(A)\Rightarrow m,m'\models A$ (because $m$ and $m'$ are models of $o$ and $o'$ and $A$ contains less constraints to satisfy than $k(A)$), so $\forall A\in\Lambda(o,o'), Cn^{\alpha}_{\Omega,\Lambda}(A)\subseteq Cn^{\alpha}_{\Omega',\Lambda'}(k(A))$. 
%Thus, $Cn(\langle\Omega,\Lambda\rangle)\sqsubseteq Cn(\langle\Omega',\Lambda'\rangle)$.

\end{itemize*}
\end{proof}

\begin{example}
Figure~\ref{fig:extreme} displays an extreme example of a network.
% which is its own closure. 
This network is inconsistent in the interpretation of Example~\ref{ex:relations}, though none of its ontologies nor alignments is inconsistent.
The inconsistency manifests itself by starting with the network without one of the correspondences and revising it by this correspondence. 
It can be only solved by suppressing one of the \emph{other} correspondences.

% JE: this is now subsumed by the inconsistency examples
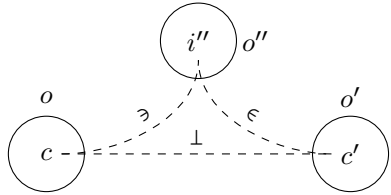
\begin{figure}[!h]
\begin{center}
\begin{tikzpicture}[thin] 

\draw (0,0) node (c) {$c$};
\draw (4,0) node (cp) {$c'$};
\draw (2,1.5) node (ipp) {$i''$};
\draw[dashed] (c.east) -- node[above] {$\bot$} (cp.west);
\draw[dashed] (cp) .. controls +(-1,0)  and +(0,-1) ..  node[above,sloped] {$\in$} (ipp);
\draw[dashed] (c) .. controls +(1,0)  and +(0,-1) ..  node[above,sloped] {$\ni$} (ipp);

\draw (0,0) ellipse (.5cm and .5cm);
\draw (0,.75) node {$o$};
\draw (4,0) ellipse (.5cm and .5cm);
\draw (4,.75) node {$o'$};
\draw (2,1.5) ellipse (.5cm and .5cm);
\draw (2.75,1.5) node {$o''$};

\end{tikzpicture}
\end{center}
\caption{Globally inconsistent alignment pattern.}\label{fig:extreme}% (without any knowledge of the ontologies)
\end{figure}
\end{example}

}{

\section{Semantic constraints}\label{sec:constraints}

There has been many semantics proposed for alignments and networks of ontologies.
For the sake of simplicity, we will only rely on the abstract framework given in Section~\ref{sec:abssem} as long as the semantics satisfies the three following properties:
\begin{description}
\item[There exists a sound normalisation,] i.e., each network of ontology can be represented as a normalised network having exactly the same set of models;
\begin{enumerate}
\item $\forall \langle \Omega, \Lambda\rangle$, $\exists \langle \Omega, \Lambda'\rangle$ such that $\forall o, o'\in\Omega$, $|\Lambda'(o,o')|=1$ and $\mathcal{M}(\langle \Omega, \Lambda\rangle)=\mathcal{M}(\langle \Omega, \Lambda'\rangle)$.
\end{enumerate}
\item[Downward consistency preservation] on normalised networks, so that if a network is subsumed by a consistent network, then it is consistent: 
\begin{enumerate}[resume]
\item If $\langle \Omega, \Lambda\rangle \sqsubseteq \langle \Omega', \Lambda'\rangle$ and $\langle \Omega', \Lambda'\rangle$ is consistent, then $\langle \Omega, \Lambda\rangle$ is consistent
\end{enumerate}
\item[$Cn$ is a closure operation] on normalised networks:
\begin{enumerate}[resume]
\item $\langle \Omega, \Lambda\rangle \sqsubseteq Cn(\langle \Omega, \Lambda\rangle)$
\item $Cn(\langle \Omega, \Lambda\rangle) \sqsubseteq Cn(Cn(\langle \Omega, \Lambda\rangle))$
\item $\langle \Omega, \Lambda\rangle \sqsubseteq \langle \Omega', \Lambda'\rangle \Rightarrow Cn(\langle \Omega, \Lambda\rangle) \sqsubseteq Cn(\langle \Omega', \Lambda'\rangle)$
\end{enumerate}
\end{description}
These are rather natural properties.
The remainder of the paper relies only on these properties.
In \cite{euzenat2014d}, we show that these constraints are satisfied by those semantics defined as in Section~\ref{sec:abssem}.

}

\newpage
\section{The categories of networks of ontologies}\label{sec:categ}

We introduce categories of networks of ontologies.
This allows to consider networks of ontologies from a more abstract perspective and to take advantage of general operations, such as pullbacks.

\subsection{Syntactic category of networks of ontologies}\label{sec:syncateg}

Networks of ontologies, together with the morphisms previously defined form a category.

%A category is defined by:
%- a set of objects
%- a set of morphisms
%such that
%- for each object there exists a morphism Id_O: O -> O
%satisfying: for any f: A -> B, Id_B o f = f = f o Id_A
%- if f: A -> B and g: B -> C then g o f : A -> C
%exists and is such that:
%if f: A -> B, g: B -> C, h: C -> D
%h o ( g o f ) = (h o g) o f

\begin{definition}[Category $\mathcal{NOO}$]
Let $\mathcal{NOO}$ be the structure made of:
\begin{itemize*}
\item objects are networks of ontologies as in Definition~\ref{def:noo};
\item morphisms are morphisms between networks of ontologies as in Definition~\ref{def:synmorph};
\end{itemize*}
such that
\begin{itemize*}
\item the composition of morphisms is simply function composition: $\langle h, k\rangle\circ\langle h', k'\rangle=\langle h\circ h', k\circ k'\rangle$;
\item the identity morphism $1_{\langle \Omega, \Lambda\rangle}=\langle 1_{\Omega}, 1_{\Lambda}\rangle$ is defined as the morphism associating an ontology to itself and an alignment to itself.
\end{itemize*}
\end{definition}

%\textbf{I must fix the order around the $\circ$: $g \circ f (x) = f(g(x))$.}

\begin{property}
$\mathcal{NOO}$ is a category
\end{property}
\begin{proof}
The proof directly follows from the definition:
Given a morphism $\langle h, k\rangle$ between networks $\langle\Omega, \Lambda\rangle$ and $\langle\Omega', \Lambda'\rangle$ and $\langle h', k'\rangle$ between networks $\langle\Omega', \Lambda'\rangle$ and $\langle\Omega'', \Lambda''\rangle$, the result of their composition $\langle h'\circ h, k'\circ k\rangle$ is a morphism between  $\langle\Omega, \Lambda\rangle$ and $\langle\Omega'', \Lambda''\rangle$.
Indeed, If $\forall o\in\Omega$, $o\subseteq h(o)$, and $\forall o'\in\Omega'$, $o'\subseteq h'(o')$, then $o\subseteq h(h'(o))=h'\circ h(o)$.
If $\forall A\in\Lambda(o,p)$, $k(A)\in\Lambda'(h(o),h(p))$ and $\forall A'\in\Lambda'(o',p')$, $k'(A')\in\Lambda''(h'(o'),h(p'))$
then $\forall A\in\Lambda(o,p)$, $k'\circ k(A)\in\Lambda'(h'\circ h(o),h'\circ h(p))$.
If $\forall A\in\Lambda(o,p)$, $A\subseteq k(A)$ and $\forall A'\subseteq\Lambda'(o',p')$, $A'\in k'(A')$
then $\forall A\in\Lambda(o,p)$, $A\subseteq k(A)\subseteq k'(k(A))=k'\circ k(A)$.

Composition is associative, i.e., if $f: A \rightarrow B$, $g: B \rightarrow C$, $h: C \rightarrow D$, then $h \circ ( g \circ f ) = (h \circ g) \circ f$, simply because function composition is associative.

The identity morphism is indeed a morphism, because $\forall o\in\Omega$, $o\subseteq o=1_{\Omega}(o)$ and $\forall A\in\Lambda(o,p)$, $1_{\Lambda}(A)=A\in \Lambda(o,p)=\Lambda(1_{\Omega}(o),1_{\Omega}(p))$. Moreover, $A\subseteq A=1_{\Lambda}(A)$.

Finally, $\forall \langle h, k\rangle$ between networks $\langle\Omega, \Lambda\rangle$ and $\langle\Omega', \Lambda'\rangle$,
$1_{\langle\Omega, \Lambda\rangle}\circ \langle h, k\rangle = \langle 1_{\Omega}\circ h, 1_{\Lambda}\circ k\rangle = \langle h, k\rangle = \langle h\circ 1_{\Omega'}, k\circ 1_{\Lambda'}\rangle = \langle h, k\rangle\circ 1_{\langle\Omega', \Lambda'\rangle}$.
\end{proof}

A conjunction (meet) operator between networks of ontologies can be introduced generally in a standard way from subsumption, as the greatest common subsumee, but it would not be necessarily unique. 
However, it is possible to introduce a fibred meet, denoted $\overline{\sqcap}$, which define the conjunction of a set of networks along a set of isomorphisms to a common generating network.

\begin{definition}[Fibred meet of networks of ontologies]
Given a network of ontologies $\langle\Omega, \Lambda\rangle$
and a finite family of networks of ontologies, $\{\langle\Omega_j, \Lambda_j\rangle\}_{j\in J}$, 
such that $\exists \langle h_j, k_j\rangle_{j\in J}$, pairs of 
%one-to-one morphisms: 
isomorphisms:
$h_j:\Omega_j\longrightarrow\Omega$ and $k_j:\Lambda_j\longrightarrow\Lambda$ 
with $\forall A\in\Lambda(o,o')$, $k_j(A)\in \Lambda(h_j(o),h_j(o'))$, the \emph{fibred meet} of $\{\langle\Omega_j, \Lambda_j\rangle\}_{j\in J}$ with respect to $\langle h_j, k_j\rangle_{j\in J}$ is
$$\overline{\bigsqcap_{j\in J}}\langle \Omega_j,\Lambda_j\rangle = \langle \{ \bigcap_{j\in J} h_j^{-1}(o)\}_{o\in\Omega}, \{\bigcap_{j\in J} k_j^{-1}(A)\}_{A\in\Lambda} \rangle$$
\end{definition}

We may denote $\overline{\bigsqcap}_{j\in J}\langle \Omega_j,\Lambda_j\rangle$ as $\langle \overline{\Omega}, \overline{\Lambda}\rangle$.

The generating morphisms must be isomorphisms because, if they were not injective, the $\pi_j$ cannot be defined (there would be two candidate targets), and if they were not surjective, they would not necessarily cover the whole source network of alignment (see Figure~\ref{fig:iso}).

The degenerated case of a fibred meet is when all ontologies are empty (and hence alignments are empty as well).

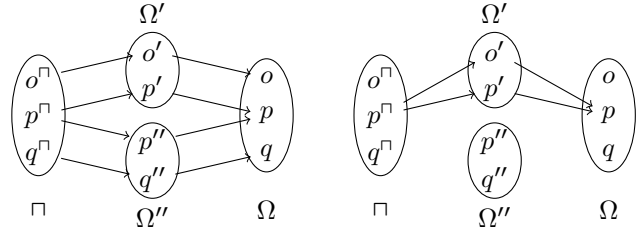
\begin{figure}[!h]
\begin{center}
\begin{tikzpicture}[thin,scale=.5] 

\draw (0,3) node (a1) {$o^{\sqcap}$};
\draw (0,2) node (b1) {$p^{\sqcap}$};
\draw (0,1) node (c1) {$q^{\sqcap}$};
\draw (0,2) ellipse (.7cm and 1.6cm);
\draw (0,-.5) node () {$\sqcap$};

\draw (3,4.7) node () {$\Omega'$};
\draw (3,3.7) node (a2) {$o'$};
\draw (3,2.7) node (b2) {$p'$};
\draw (3,3.2) ellipse (.7cm and 1cm);

\draw (3,1.3) node (b3) {$p''$};
\draw (3,0.3) node (c3) {$q''$};
\draw (3,.8) ellipse (.7cm and 1cm);
\draw (3,-.7) node () {$\Omega''$};

\draw (6,3) node (a4) {$o$};
\draw (6,2) node (b4) {$p$};
\draw (6,1) node (c4) {$q$};
\draw (6,2) ellipse (.7cm and 1.5cm);
\draw (6,-.5) node () {$\Omega$};

\draw[->] (a1) -- (a2);
\draw[->] (b1) -- (b2);
\draw[->] (b1) -- (b3);
\draw[->] (c1) -- (c3);

\draw[->] (a2) -- (a4);
\draw[->] (b2) -- (b4);
\draw[->] (b3) -- (b4);
\draw[->] (c3) -- (c4);

\begin{scope}[xshift=9cm]

\draw (0,3) node (a1) {$o^{\sqcap}$};
\draw (0,2) node (b1) {$p^{\sqcap}$};
\draw (0,1) node (c1) {$q^{\sqcap}$};
\draw (0,2) ellipse (.7cm and 1.6cm);
\draw (0,-.5) node () {$\sqcap$};

\draw (3,4.7) node () {$\Omega'$};
\draw (3,3.7) node (a2) {$o'$};
\draw (3,2.7) node (b2) {$p'$};
\draw (3,3.2) ellipse (.7cm and 1cm);

\draw (3,1.3) node (b3) {$p''$};
\draw (3,0.3) node (c3) {$q''$};
\draw (3,.8) ellipse (.7cm and 1cm);
\draw (3,-.7) node () {$\Omega''$};

\draw (6,3) node (a4) {$o$};
\draw (6,2) node (b4) {$p$};
\draw (6,1) node (c4) {$q$};
\draw (6,2) ellipse (.7cm and 1.5cm);
\draw (6,-.5) node () {$\Omega$};

\draw[->] (b1) -- (a2);
\draw[->] (b1) -- (b2);

\draw[->] (a2) -- (b4);
\draw[->] (b2) -- (b4);

\end{scope}

\end{tikzpicture}
\end{center}
\caption{Two cases of non isomorphic generators (ellipsis represents sets of ontologies, letters represent ontologies). 
On the left, the generating morphisms are not surjective, hence functions generated from $\sqcap$ to $\Omega'$ and $\Omega''$ are not morphisms (they do not cover $\sqcap$). 
On the right, the generating morphism is not injective, so it is not possible to create a morphism from $\sqcap$ to $\Omega'$.}\label{fig:iso}
\end{figure}

The fibred meet is a pullback in the $\mathcal{NOO}$ category.

% IF NOT pullbacks, then isopullbacks.
\begin{property}
The fibred meet is a pullback for the $\mathcal{NOO}$ category.
\end{property}
\begin{proof}
$\pi_{j'}=\langle h'_{j'}, k'_{j'}\rangle$ such that 
$h'_{j'}(\bigcap_{j\in J} h_j^{-1}(o))=h_j^{-1}(o)$, and
$k'_{j'}(\bigcap_{j\in J} k_j^{-1}(A))=k_j^{-1}(A)$.
This is a morphism since (1) $\bigcap_{j\in J} h_j^{-1}(o)\subseteq h_{j'}^{-1}(o)$,
(2) $\bigcap_{j\in J} k_j^{-1}(A)\subseteq k_{j'}^{-1}(A)$, and
(3) if $\bigcap_{j\in J} k_j^{-1}(A)\in\overline{\Lambda}(\bigcap_{j\in J} h_j^{-1}(o), \bigcap_{j\in J} h_j^{-1}(o'))$, 
 then $k'_{j'}(\bigcap_{j\in J} k_j^{-1}(A))\in\Lambda_{j'}( h_{j'}^{-1}(o), h_{j'}^{-1}(o'))=\Lambda_{j'}( h'_{j'}(\bigcap_{j\in J} h_j^{-1}(o))$, $h'_{j'}(\bigcap_{j\in J} h_j^{-1}(o'))$.

Moreover, the diagram commutes, i.e., $\forall j', j''\in J$, $\langle h_{j'}, k_{j'}\rangle\circ\pi_{j'} = \langle h_{j''}, k_{j''}\rangle\circ\pi_{j''}$, because
$h'_{j'}\circ h_{j'}(\bigcap_{j\in J} h_j^{-1}(o)) = o = h'_{j''}\circ h_{j''}(\bigcap_{j\in J} h_j^{-1}(o))$ and
$k'_{j'}\circ k_{j'}(\bigcap_{j\in J} k_j^{-1}(A)) = A = k'_{j''}\circ k_{j''}(\bigcap_{j\in J} k_j^{-1}(A))$
because $\langle h_j, k_j\rangle$ are pairs of isomorphism.

Finally, $\forall \theta_{j'}(x)=h_{j'}^{-1}(o)$, then $u$ can be defined such that $u(x)=\bigcap_{j\in J} h^{-1}_j(o)$.
Similarly, $\forall \theta_{j'}(Y)=k_{j'}^{-1}(A)$, then $u(Y)$ can be defined as $u(Y)=\bigcap_{j\in J} h^{-1}_j(A)$
then $\theta_{j'}(Y)=u\circ\pi_{j'}(Y)$, so the fibred meet is universal.
\end{proof}

\begin{center} 
\begin{tikzpicture}

\draw (-1,3) node (im) {$\langle\Omega', \Lambda'\rangle$};
\draw (2,3) node (prod) {$\overline{\bigsqcap}_{j\in J}\langle \Omega_j,\Lambda_j\rangle$};
\draw (2,1) node (im1) {$\langle\Omega_j, \Lambda_j\rangle$};
\draw (5,1) node (im0) {$\langle\Omega, \Lambda\rangle$};

\draw[->] (im) -- node[left] {$\theta_j$} (im1); %node[left] {$\theta_1$} 
\draw[->] (prod) -- node[right] {$\pi_j$} (im1); %node[left] {$\theta_1$}
\draw[->] (im1) -- node[above] {$\langle h_j, k_j\rangle$} (im0);%node[above] {$f_1$}
\draw[->,dashed] (im) -- node[above]{$u$} (prod);%node[above] {$\zeta$} 

%\draw (0,3) node (im) {N};
%\draw (2,2) node (prod) {N$_1\otimes$N$_2$};
%\draw (2,0) node (im1) {N$_1$};
%\draw (4,2) node (im2) {N$_2$};
%\draw (4,0) node (im0) {N$_{*}$};
%
%\draw[->] (im) -- (im1); %node[left] {$\theta_1$} 
%\draw[->] (im) --  (im2); %node[above] {$\theta_2$}
%\draw[->] (prod) --  (im1); %node[left] {$\theta_1$}
%\draw[->] (prod) --  (im2);%node[above] {$\theta_2$}
%\draw[->] (im1) --  (im0);%node[above] {$f_1$}
%\draw[->] (im2) -- (im0);%node[right] {$f_2$} 
%\draw[->,dashed] (im) -- (prod);%node[above] {$\zeta$} 
%
\end{tikzpicture}
\end{center} 

Such a fibred meet is well-defined and unique up to isomorphism for any set of subnetworks of a particular generator network.
% such as the set of maximal consistent subnetworks. 
It does not exists for any diagrams with morphisms to an object, the morphisms have to be isomorphic as presented in Figure~\ref{fig:iso}.

It satisfies the premises of Property~\ref{prop:meetincl}.

\begin{property}\label{prop:meetincl}
Let $\langle\Omega, \Lambda\rangle$ be a network of ontologies
and $\{\langle\Omega_j, \Lambda_j\rangle\}_{j\in J}$ a finite family of networks of ontologies
whose fibred meet with respect to $\langle\Omega, \Lambda\rangle$ is defined through morphisms $\langle h_j, k_j\rangle$.
%
%such that
%$\forall j\in J, \forall o\in\Omega_j, o\subseteq h_j(o)$ and $\forall A\in \Lambda_j, A\subseteq k_j(A)$ then 
$$\forall j\in J,\quad \overline{\bigsqcap_{j'\in J}}\langle \Omega_{j'},\Lambda_{j'}\rangle ~\sqsubseteq~ \langle \Omega_j,\Lambda_j\rangle ~\sqsubseteq~ \langle \Omega,\Lambda\rangle$$
\end{property}
\onlyrr{
\begin{proof}
$\langle \Omega_j,\Lambda_j\rangle \sqsubseteq \langle \Omega,\Lambda\rangle$ is straightforward because the generating pairs of morphism $\langle h_j, k_j\rangle$ are already such that this is true.

\noindent $\overline{\bigsqcap}_{j'\in J}\langle \Omega_{j'},\Lambda_{j'}\rangle \sqsubseteq \langle \Omega_j,\Lambda_j\rangle$ because
it is possible to consider the pair of morphisms $\langle h'_j, k'_j\rangle$ such that
$h'_j(\bigcap_{j'\in J} h_{j'}^{-1}(o)) = h^{-1}_j(o)$ (and obviously, $\bigcap_{j'\in J} h_{j'}^{-1}(o)\subseteq h^{-1}_j(o)$).
Similarly, 
$k'_j(\bigcap_{j'\in J} k_{j'}^{-1}(A)) = k_{j}^{-1}(A)$ and again, $\bigcap_{j'\in J} k_{j'}^{-1}(A) \subseteq k_{j}^{-1}(A)$.
These are graph morphisms because all $k_{j'}^{-1}(A)$ are alignments between the precursors of the ontologies of $A$.
\end{proof}
}{}

The 
%normal
fibred meet generated by consistent subnetworks is always consistent since it is subsumed by consistent networks (downward consistency preservation).

Moreover, if the ontologies in the networks are closed, then the fibred meet is closed as well (because it intersects closed networks).

\subsection{Category of weighted networks of ontologies}\label{sec:wcateg}

This category can be generalised for taking weights into account \cite{atencia2011a}.
Following the presentation of \cite{euzenat2013c},
the relationship between two entities can be assigned a degree of confidence, or weight, taken from a confidence structure.

\begin{definition}[Confidence structure]\index{confidence!degree|emph}
\index{confidence!structure|emph}\index{degree!confidence -|emph}\index{$\Xi$ (confidence structure)|emph}
A confidence structure is an ordered set of degrees $\langle \Xi, \leq\rangle$ for which there exists
the greatest element $\top$ and the smallest element $\bot$.
\end{definition}

The usage of confidence degrees is such that the higher the degree with regard to $\leq$, the more likely
the relation holds.
It is convenient to interpret the greatest element as the Boolean true and the least element as
the Boolean false. 
The function $\kappa_A(\mu): Q_L(o)\times Q'_{L'}(o')\times\Theta\rightarrow \Xi$\index{$\kappa$ (confidence function)|emph} provides the confidence for a correspondence $\mu$ in an alignment $A$.

Networks of ontologies may be defined on weighted alignments, by simply integrating $\kappa_A$ in $A$.
Morphisms can be defined by taking into account the confidence (or weight) associated to correspondences.

\begin{definition}[Weight-aware morphism between networks of ontologies]\label{def:weightmorph}
Given two networks of ontologies, $\langle\Omega, \Lambda\rangle$ and $\langle \Omega',\Lambda'\rangle$,
a \emph{weight-aware morphism} between $\langle\Omega, \Lambda\rangle$ and $\langle \Omega',\Lambda'\rangle$, is
$\langle h, k\rangle$, a pair of morphisms: $h:\Omega\longrightarrow\Omega'$ and $k:\Lambda\longrightarrow\Lambda'$ such that
$\forall o\in\Omega$, $\exists h(o)\in\Omega'$ and $o\subseteq h(o)$ and 
$\forall A\in\Lambda(o,o'), \forall \mu\in A$, $\exists k(A)\in \Lambda'(h(o),h(o'))$ and $\mu'\in k(A)$ such that $\mu=\mu'$ and $\kappa_A(\mu)\leq \kappa_{k(A)}(\mu')$.
\end{definition}

They can be used for defining a new category:

\begin{definition}[Category $\mathcal{NOO_W}$]
Let $\mathcal{NOO_W}$ be the structure made of:
\begin{itemize*}
\item objects are networks of ontologies as in Definition~\ref{def:noo} with weighted alignments;
\item morphisms are weight-aware morphisms between networks of ontologies as in Definition~\ref{def:weightmorph};
\end{itemize*}
\end{definition}

Obviously this is a category again, we omit the proof.

There is a natural family of functors $F_{w}$, with $w\in\Xi$ a threshold value, which are simply quotientation functions.

\begin{definition}[$F_w$ function]
The function $F_w: \mathcal{NOO_W}\rightarrow \mathcal{NOO}$ with $w$ within the codomain of $\kappa$ defined as:
\begin{align*}
F_{w}(\langle\Omega, \Lambda\rangle) &=\langle\Omega, \{ F_{w}(A)\mid A\in \Lambda\}\rangle\\
F_{w}(\langle h, k\rangle) &=\langle h, F_{w}(k)\rangle\\
F_{w}(A) &= \{ \mu\in A\mid \kappa(\mu)\geq w\}\\
%F_{w}(k)(F_{w}(A)) &= \{ \mu\in k(A)\mid \kappa(\mu)\geq w\}=F_{w}(k(A))
F_{w}(k)(\mu) &= k(\mu)
\end{align*}
\end{definition}

\begin{property}
$F_w$ is a functor
\end{property}
\begin{proof}
Clearly, $\forall k: \Lambda\rightarrow\Lambda'$,  $F_{w}(k): F_{w}(\Lambda)\rightarrow F_{w}(\Lambda')$,
$F_{w}(k)(F_{w}(A)) = \{ \mu\in k(A)\mid \kappa(\mu)\geq w\} = F_{w}(k(A))$.
Moreover, if $k(A)\in \Lambda'$, then $F_{w}(k(A))\in F_{w}(\Lambda)$.
Finally, $F_{w}(k)$ is a (weight-blind) morphism because, if $\mu\in F_{w}(A)$, then $\kappa(\mu)\leq \kappa(k(\mu))$,
so, $k(\mu)\in F_{w}(k)(A)$.
\end{proof}

These functors are both applying thresholds to the content of alignments and ignoring the weights, it is also possible, with similar arguments to define functors which perform both functions independently (they do not commute: weight-ignorance can only be applied at the end of the chain).

% Define it: F(\langle\Omega, \Lambda\rangle)=\langle\Omega, \dots\rangle
% F(\langle h, k\rangle)=\langle h, k'\dots\rangle such that k'(

%\subsection{Semantic category of networks of ontologies}\label{sec:semcateg}
\subsection{Further refinements}

Further refinements may be introduced such as considering that relations are not independent, e.g., that $\langle e, =, e'\rangle$ entails $\langle e, \sqsubseteq, e'\rangle$ and thus that morphisms should be defined by taking this into account.
This is slightly more difficult because several correspondences may be equated to a single correspondence. This approach may be refined by using algebras of relations \cite{euzenat2008e}.

Finally, considering semantic morphisms would complete the picture.

\section{Related work}\label{sec:rwork}

Obviously the work presented in (\S\ref{sec:abssem}) is related to this one focussing on one specific semantics for alignments and eventually for ontologies.

In addition, the work around DOL \cite{omg2014a} is a wider effort building from the ground (logic) up to ontologies and alignments through describing the semantics of ontology and alignment languages.
Instead, we start from these ontologies and alignments and abstract from their underlying semantics to determine properties on top of which general purpose operations can be built.
Hopefully, these two efforts will meet.

\section{Conclusion}

This report brought two modest contributions:
\begin{itemize*}
\item a semantics for networks of ontologies which is parameterized by actual semantics and properties independent from the chosen semantics;
\item an abstraction of networks of ontologies into categories on which pullbacks can be defined.
\end{itemize*}
These contributions provide general properties that can be used for defining concrete operations.
We used them for defining revision of networks of ontologies.

The category of networks have been defined purely syntactically with a disconnection to the semantics.
Doing it semantically would be useful as well.

\section*{Acknowledgements}

I am thankful to Antoine Zimmermann, Manuel Atencia and Armen Inants for fruitful discussions.

%%%%%%%%%%%%%%%%%%%%%% END %%%%%%%%%%%%%%%%%%%%%%%%%

%\bibliographystyle{elsarticle-harv}
\bibliographystyle{plain}
%\bibliographystyle{named}
%\bibliography{ar,revision,ombib,semweb,align,ombib2}
\bibliography{aij-revision}

\begin{thebibliography}{10}

\bibitem{omg2014a}
The distributed ontology, model and specification language ({DOL}).
\newblock Technical Report ad/2014-10-01, Object management group, 2014.

\bibitem{antoniou2012a}
Grigoris Antoniou, Paul Groth, Frank {van Harmelen}, and Rinke Hoekstra.
\newblock {\em A semantic web primer}.
\newblock The MIT press, Cambridge (MA US), 3rd edition, 2012.

\bibitem{atencia2011a}
{Manuel} {Atencia}, {J\'er\^ome} {Euzenat}, {Giuseppe} {Pirr\`o}, and
  Marie-Christine {Rousset}.
\newblock Alignment-based trust for resource finding in semantic {P2P}
  networks.
\newblock In {\em Proc. 10th international semantic web conference (ISWC), Bonn
  (DE)}, volume 7031 of {\em Lecture notes in computer science}, pages 51--66,
  2011.

\bibitem{borgida2003a}
Alexander Borgida and Luciano Serafini.
\newblock Distributed description logics: Assimilating information from peer
  sources.
\newblock {\em Journal on data semantics}, I:153--184, 2003.

\bibitem{cuencagrau2006a}
Bernardo {Cuenca Grau}, Bijan Parsia, and Evren Sirin.
\newblock Combining {OWL} ontologies using $\mathcal{E}$-connections.
\newblock {\em Journal of web semantics}, 4(1):40--59, 2006.

\bibitem{euzenat2007a}
J\'er\^ome Euzenat.
\newblock Semantic precision and recall for ontology alignment evaluation.
\newblock In {\em Proc. 20th international joint conference on artificial
  intelligence (IJCAI), Hyderabad (IN)}, pages 348--353, 2007.

\bibitem{euzenat2008e}
{J\'{e}r\^{o}me} {Euzenat}.
\newblock Algebras of ontology alignment relations.
\newblock In {\em Proc. 7th international semantic web conference (ISWC),
  Karlsruhe (DE)}, pages 387--402, 2008.

\bibitem{euzenat2013c}
J\'er\^ome Euzenat and Pavel Shvaiko.
\newblock {\em Ontology matching}.
\newblock Springer, Heidelberg (DE), 2nd edition, 2013.

\bibitem{franconi2003a}
Enrico Franconi, Gabriel Kuper, Andrei Lopatenko, and Luciano Serafini.
\newblock A robust logical and computational characterisation of peer-to-peer
  database systems.
\newblock In {\em Proc. VLDB workshop on databases, information systems and
  peer-to-peer computing ({DBISP2P}), Berlin (DE)}, pages 64--76, 2003.

\bibitem{ghidini2001a}
Chiara Ghidini and Fausto Giunchiglia.
\newblock Local models semantics, or contextual reasoning = locality +
  compatibility.
\newblock {\em Artificial intelligence}, 127(2):221--259, 2001.

\bibitem{ghidini1998a}
Chiara Ghidini and Luciano Serafini.
\newblock Distributed first order logics.
\newblock In {\em Proc. 2nd conference on frontiers of combining systems
  (FroCoS), Amsterdam (NL)}, pages 121--139, 1998.

\bibitem{hitzler2009a}
Pascal Hitzler, Markus Kr{\"o}tzsch, and Sebastian Rudolph.
\newblock {\em Foundations of semantic web technologies}.
\newblock Chapman \&{} Hall/CRC, 2009.

\bibitem{lenzerini2002a}
Maurizio Lenzerini.
\newblock Data integration: A theoretical perspective.
\newblock In {\em Proc. 21st symposium on principles of database systems
  (PODS), Madison (WI US)}, pages 233--246, 2002.

\bibitem{zimmermann2008c}
{Antoine} {Zimmermann}.
\newblock {\em S\'emantique des r\'eseaux de connaissances: gestion de
  l'h\'et\'erog\'en\'eit\'e fond\'ee sur le principe de m\'ediation}.
\newblock Th\`ese d'informatique, Universit\'e Joseph Fourier, Grenoble (FR),
  2008.

\bibitem{zimmermann2013a}
Antoine Zimmermann.
\newblock Logical formalisms for agreement technologies.
\newblock In Sasha Ossowski, editor, {\em Agreement technologies}, chapter~5,
  pages 69--82. Springer Verlag, Heidelberg (DE), 2013.

\bibitem{zimmermann2006b}
{Antoine} {Zimmermann} and {J\'er\^ome} {Euzenat}.
\newblock Three semantics for distributed systems and their relations with
  alignment composition.
\newblock In {\em Proc. 5th international semantic web conference (ISWC),
  Athens (GA US)}, volume 4273 of {\em Lecture notes in computer science},
  pages 16--29, 2006.

\end{thebibliography}

\newpage
\tableofcontents

\end{document}